\documentclass[12pt]{article}
\usepackage{arxiv}

\usepackage[utf8]{inputenc} %
\usepackage[T1]{fontenc}    %
\usepackage{booktabs}       %
\usepackage{amsfonts}       %
\usepackage{nicefrac}       %
\usepackage{microtype}      %

\usepackage{url}
\usepackage{xcolor}         %
\usepackage[inline]{enumitem} %

\usepackage[numbers,sort&compress]{natbib}
\usepackage{doi}
\usepackage{algorithm,algpseudocode}
\usepackage{amsmath}
\usepackage{amssymb}
\usepackage{mathtools}
\usepackage{amsthm}
\usepackage{multirow, array, threeparttable}
\usepackage[font=normalsize,skip=4pt]{caption}
\usepackage{subcaption}

\usepackage{graphicx}   %
\hypersetup{
    colorlinks,
    linkcolor={blue!60!black},
    citecolor={red!60!black},
    urlcolor={blue!80!black}
}
\usepackage[capitalize,nameinlink]{cleveref}
\DeclareMathOperator{\diag}{diag}
\DeclareMathOperator{\Cov}{Cov}
\DeclareMathOperator{\Var}{Var}
\newcommand{\iid}{\overset{\text{iid}}{\sim}}

\makeatletter
\renewcommand{\paragraph}{%
  \@startsection{paragraph}{4}{\z@}%
  {0.25\baselineskip}  %
  {-0.25em}            %
  {\normalfont\normalsize\bfseries}%
}
\makeatother

\usepackage{enumitem}

\newtheorem{theorem}{Theorem}
\newtheorem{proposition}{Proposition}
\theoremstyle{definition}
\newtheorem{definition}[theorem]{Definition}

\algnewcommand\algorithmicinput{\textbf{Input:}}
\algnewcommand\algorithmicoutput{\textbf{Output:}}
\algnewcommand\algorithmicnote{\textbf{Note:}}
\algnewcommand\Input{\item[\algorithmicinput]}%
\algnewcommand\Output{\item[\algorithmicoutput]}%
\algnewcommand\Note{\item[\algorithmicnote]}%

\title{Calibrating Scientific Foundation Models with Inference-Time Stochastic Attention}

\author{
{\hspace{1mm}Akash Yadav$^{*}$} \\
	University of Houston\\
	\texttt{ayadav4@uh.edu} \\
        \And
{\hspace{1mm}Taiwo A. Adebiyi$^{*}$} \\
	University of Houston\\
	\texttt{taadebiyi2@uh.edu} \\
        \And
{\hspace{1mm}Ruda Zhang\textsuperscript{\textdagger}} \\
	University of Houston\\
	\texttt{rudaz@uh.edu} \\
}

\date{}

\renewcommand{\headeright}{ }
\renewcommand{\shorttitle}{Calibrating Scientific Foundation Models with Inference-Time Stochastic Attention}

\begin{document}
\maketitle

\begingroup
\renewcommand\thefootnote{*}
\footnotetext{Equal contribution.}
\renewcommand\thefootnote{\textdagger}
\footnotetext{Corresponding author.}
\endgroup

\begin{abstract}
  Transformer-based scientific foundation models are increasingly deployed in high-stakes settings,
  but current architectures give deterministic outputs and provide limited support
  for calibrated predictive uncertainty.
  We propose \textit{Stochastic Attention}, a sample average lightweight inference-time modification
  that randomizes attention by replacing softmax weights with normalized multinomial samples
  controlled by a single concentration parameter, and produces predictive ensembles
  without retraining. To set this parameter, we introduce a calibration objective
  that matches the stochastic attention output with the target,
  yielding an efficient univariate post-hoc tuning problem.
  We evaluate this mechanism on scientific foundation models for weather and time-series
  forecasting, as well as a regression task. Across benchmarks against
  uncertainty-aware baselines, we find that Sample Average Stochastic Attention achieves the strongest native
  calibration and the sharpest prediction intervals at comparable calibration, with adaptation costs nearly three orders of magnitude lower than the next-best baseline.
\end{abstract}

\section{Introduction}

As deep learning enters scientific modeling, the field has had to confront an uncomfortable fact: some of its most accurate modern predictors are mechanistically opaque. Across scientific domains, these models are no longer viewed merely as flexible function approximators, but increasingly as useful complements to classical modeling pipelines. In weather forecasting, for example, learned systems now rival, and in some cases surpass, long-standing forecasting workflows on prominent benchmarks (Figure~\ref{fig:teaser}a) \cite{NguyenClimaX23}. What remains unresolved is whether such models can be trusted when uncertainty matters. Scientific predictors often operate under heterogeneous data, noisy supervision, misspecification, and distribution shift; in these regimes, a point forecast is insufficient without calibrated uncertainty to support decisions.

\begin{figure*}[!t]
	\centering
	\includegraphics[width=\textwidth]{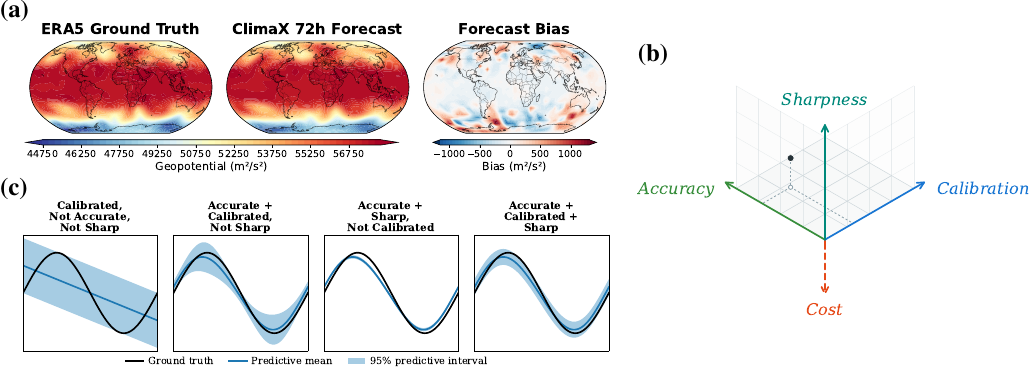}
	\caption{Useful uncertainty in SciFMs is multi-dimensional. \textbf{(a)} ClimaX 72-hour weather forecasting: ERA5 ground truth, model forecast, and bias. \textbf{(b)} Accuracy, calibration, sharpness, and cost as complementary axes of uncertainty quality. \textbf{(c)} Sinusoidal example: predictors recovering the same trend can attach qualitatively different uncertainty intervals to it.}
	\label{fig:teaser}
\end{figure*}

This requirement is not met by simply attaching intervals to a deterministic prediction. Uncertainty quality has several dimensions that do not collapse into one another. A model can be accurate but overconfident, calibrated but too diffuse to support action, or sharp only because it understates risk. Figure~\ref{fig:teaser}b summarizes the four axes used throughout the paper: accuracy, calibration, sharpness, and cost. Figure~\ref{fig:teaser}c gives a simple example in which predictors recover the same sinusoidal trend while assigning qualitatively different uncertainty to the same inputs. This distinction is standard in probabilistic forecasting, where calibration and sharpness are complementary properties rather than interchangeable criteria \cite{Gneiting2007,Kuleshov2022}. For scientific foundation models, cost belongs in the same lens: uncertainty methods that require repeated training, posterior approximation, or large ensembles can become impractical as model scale increases.

The existing uncertainty literature provides several ways to construct predictive distributions, but these approaches often trade off calibration, sharpness, or computational practicality depending on where they intervene. Output-level calibration methods, including temperature scaling, calibrated regression, and conformal procedures, adjust uncertainty after the prediction has already been produced \cite{Guo2017,Kuleshov2018,Vovk2005,Ye2024}. Weight-space and ensemble methods, including deep ensembles, SWAG, MultiSWAG, and recent variational approaches, place uncertainty over parameters, optimization trajectories, or independently trained models, often requiring repeated training, checkpoint collection, or modified optimization \cite{Lakshminarayanan2017,Maddox2019,Wilson2020,Shen2024}. Stochastic forward-pass methods such as MC Dropout generate samples from a single trained network, while attention-level stochastic methods intervene inside the transformer's routing computation \cite{Gal2016,Heo2018,Pei2022}. These are not merely implementation choices: they determine what source of variability is represented, when calibration must be imposed, and what computational cost is paid.

We address this trade-off at the attention bottleneck. The motivation is direct: deterministic attention already carries a probabilistic structure. For each query, the softmax weights define a categorical distribution over tokens, and the standard attention output is the expectation under that distribution. We introduce Stochastic Attention (SA), a lightweight sample-average stochastic attention mechanism that replaces this exact expectation with a normalized multinomial sample average at inference time. The construction preserves the deterministic attention output in expectation, induces controllable dispersion through a single concentration parameter $\nu$, and requires no retraining of the backbone. We select $\nu$ by a calibration criterion that matches attention-induced stochastic variability to the residual scale observed on held-out data, yielding an efficient one-dimensional post-hoc calibration problem for a fixed deterministic predictor.
Our contributions are threefold:
\begin{itemize}
\item We introduce lightweight Stochastic Attention, an inference-time uncertainty mechanism that interprets deterministic attention as an expectation under token probabilities and replaces it with normalized multinomial sampling.
\item We characterize its mean-preserving behavior and $\nu$-controlled dispersion, and formulate a calibration-first objective that selects $\nu$ by matching attention-induced variability to held-out residual scale.
\item We evaluate SA across TimesFM forecasting on ETT datasets, ClimaX global weather forecasting, and eight UCI regression tasks with FT-Transformer, showing strong native calibration and sharp uncertainty at comparable calibration, often exceeding ensemble-style baselines at substantially lower computational cost.
\end{itemize}

\section{Uncertainty quality as an operating point}
\label{sec:lens}

In practice, useful uncertainty is an operating point, not a scalar score. Accuracy concerns the quality of a representative prediction; calibration, whether the predictive distribution is statistically consistent with realized outcomes; sharpness, the concentration of that distribution, and therefore how informative the reported uncertainty is once it is credible; cost, the computational burden required to obtain the estimate. Calibration and sharpness are especially easy to conflate, but the distinction is operational: calibration establishes credibility, and sharpness determines informativeness once credibility holds. In our evaluation, PIT behavior, empirical coverage, and Wasserstein--1 distance between the empirical PIT and the uniform distribution on $[0,1]$ make calibration explicit; interval-width summaries are interpreted only after calibration has been established. Fig.~\ref{fig:teaser}b illustrates the non-collapse of these axes; Fig.~\ref{fig:teaser}c gives a controlled example where similar predictive means support qualitatively different uncertainty claims \cite{Gneiting2007,Kuleshov2022}.

Cost is part of the same operating point. The weight-space and ensemble methods above obtain predictive variability through additional model construction (repeated training, checkpoint collection, or modified optimization) \cite{Lakshminarayanan2017,Maddox2019,Wilson2020,Shen2024}; MC Dropout \cite{Gal2016} avoids repeated training but does not by itself determine a calibrated stochastic scale; attention-level methods such as Hierarchical Stochastic Attention (HSA) \cite{Heo2018,Pei2022} require architectural or training-stage changes. This distinction grows with model scale: when uncertainty quantification (UQ) scales through repeated model construction, reliable UQ becomes least accessible where large pretrained scientific models are most needed.

Proper scoring rules provide an important but different lens. Negative log-likelihood, CRPS, and Energy Score reward predictive distributions that assign appropriate mass to realized outcomes \cite{Gneiting2007,Gruber2022}; we report them where appropriate. A single proper score, however, can improve through better point accuracy, changed dispersion, tail behavior, or a different calibration--sharpness trade-off, and does not by itself identify whether a method is calibrated, sharp at comparable calibration, or feasible at deployment cost. We therefore treat proper scores as operating-point diagnostics rather than replacements for explicit calibration and sharpness comparisons.
Section~\ref{sec:method} constructs stochastic attention at the attention bottleneck, and Section~\ref{sec:calibration} selects its operating value by calibration.

\section{Stochastic attention} \label{sec:method}

Stochastic attention uses the probability structure already computed by deterministic attention. Each attention row is a distribution over tokens, and the standard attention output is the corresponding expectation. SA turns this deterministic expectation into a sample-average stochastic computation, with a concentration parameter $\nu$ controlling how tightly the stochastic output concentrates around deterministic attention.

\subsection{Deterministic attention as an expectation}

Consider a single attention head with query, key, and value matrices
$\mathbf{Q} \in \mathbb{R}^{n_q \times d}$,
$\mathbf{K} \in \mathbb{R}^{n_k \times d}$, and
$\mathbf{V} \in \mathbb{R}^{n_k \times d_v}$.
For a query position $t \in \{1,\ldots,n_q\}$, let $\mathbf{q}_t\in\mathbb{R}^d$ denote the $t$-th query row and let $\mathbf{k}_j\in\mathbb{R}^d$ and $\mathbf{v}_j\in\mathbb{R}^{d_v}$ denote the $j$-th key and value rows, respectively.
The score vector $\mathbf{s}_t \in \mathbb{R}^{n_k}$ is defined componentwise by
\begin{equation}
(\mathbf{s}_t)_j
:=
\frac{\langle \mathbf{q}_t,\mathbf{k}_j\rangle}{\sqrt d},
\qquad
j=1,\ldots,n_k.
\end{equation}
The corresponding deterministic attention weights are
$\boldsymbol{\pi}_t
:=
\operatorname{softmax}(\mathbf{s}_t)
\in \Delta^{n_k-1}$,
where $\Delta^{n_k-1}$ is the probability simplex. We regard $\boldsymbol{\pi}_t$ as a column vector. The standard attention output at position $t$ is the row vector
\begin{equation}
\mathbf{o}_t
:=
\boldsymbol{\pi}_t^\top \mathbf{V}
=
\sum_{j=1}^{n_k}(\boldsymbol{\pi}_t)_j\mathbf{v}_j .
\label{eq:det_attn_output}
\end{equation}

This expectation view is the structural fact the method uses. The attention weights are not only normalized coefficients; they define token probabilities, and $\mathbf{o}_t$ is the expected value vector under those probabilities. Standard masks, including causal and padding masks, are applied before the softmax as usual. SA acts only after the masked softmax has produced a valid simplex-valued attention vector.

\subsection{Stochastic attention via normalized multinomial sampling}

With this expectation view fixed, SA replaces exact averaging under $\boldsymbol{\pi}_t$ with a normalized multinomial empirical average. The concentration parameter $\nu$ controls the sample size of that approximation.

\begin{definition}[Stochastic attention weights] \label{def:stochastic_attention_weights}
Fix an integer $\nu \in \mathbb{N}$. Given $\boldsymbol{\pi}_t \in \Delta^{n_k-1}$, draw counts
$\mathbf{W}_t \sim \operatorname{Multinomial}(\nu,\boldsymbol{\pi}_t)$,
and define the stochastic attention weights
$\widetilde{\boldsymbol{\pi}}_t
:=
\frac{1}{\nu}\mathbf{W}_t
\in
\Delta^{n_k-1}$.

Equivalently, if $z_t^{(m)} \overset{\mathrm{iid}}{\sim} \operatorname{Categorical}(\boldsymbol{\pi}_t),
m=1,\ldots,\nu$,
and $\mathbf{e}_j$ denotes the $j$-th standard basis vector, then
$\widetilde{\boldsymbol{\pi}}_t
=
\frac{1}{\nu}
\sum_{m=1}^{\nu}
\mathbf{e}_{z_t^{(m)}}$.
The corresponding %
output is
$\widetilde{\mathbf{o}}_t
:=
\widetilde{\boldsymbol{\pi}}_t^\top \mathbf{V}
=
\frac{1}{\nu}
\sum_{m=1}^{\nu}
\mathbf{v}_{z_t^{(m)}}$.
\end{definition}

Thus, stochastic attention replaces the deterministic expectation under $\boldsymbol{\pi}_t$ by a finite-sample empirical average. If one token is attended to at random according to $\boldsymbol{\pi}_t$, the attention outcome is random. If $\nu$ such token selections are averaged, the result remains random but concentrates toward deterministic attention as $\nu$ grows. This is the stochastic family we study, with $\nu$ acting as a concentration parameter.
The first basic property is mean preservation and deterministic recovery.

\begin{proposition}[Mean preservation and deterministic recovery] \label{prop:unbiased}
Conditioned on $(\boldsymbol{\pi}_t,\mathbf{V})$, the stochastic weights and outputs satisfy
\begin{equation}
\mathbb{E}[\widetilde{\boldsymbol{\pi}}_t \mid \boldsymbol{\pi}_t]
=
\boldsymbol{\pi}_t,
\qquad
\mathbb{E}[\widetilde{\mathbf{o}}_t \mid \boldsymbol{\pi}_t,\mathbf{V}]
=
\mathbf{o}_t.
\end{equation}
Moreover, under the i.i.d. categorical representation,
\begin{equation}
\widetilde{\boldsymbol{\pi}}_t \to \boldsymbol{\pi}_t
\quad\text{and}\quad
\widetilde{\mathbf{o}}_t \to \mathbf{o}_t
\quad\text{almost surely as } \nu\to\infty.
\end{equation}
\end{proposition}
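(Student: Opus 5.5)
The plan is to work entirely with the i.i.d.\ categorical representation in Definition~\ref{def:stochastic_attention_weights}, conditioning throughout on $(\boldsymbol{\pi}_t,\mathbf{V})$ so that $\boldsymbol{\pi}_t$ and $\mathbf{V}$ are fixed deterministic objects. For mean preservation, I first compute the expectation of a single indicator vector: for each $m$, $\mathbb{E}[\mathbf{e}_{z_t^{(m)}}\mid\boldsymbol{\pi}_t]=\sum_{j=1}^{n_k}(\boldsymbol{\pi}_t)_j\mathbf{e}_j=\boldsymbol{\pi}_t$. Averaging over $m=1,\ldots,\nu$ and using linearity of expectation gives $\mathbb{E}[\widetilde{\boldsymbol{\pi}}_t\mid\boldsymbol{\pi}_t]=\boldsymbol{\pi}_t$. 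The same conclusion follows from the multinomial mean $\mathbb{E}[\mathbf{W}_t]=\nu\boldsymbol{\pi}_t$, which I will note as a cross-check.

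For the output, I use that $\widetilde{\mathbf{o}}_t=\widetilde{\boldsymbol{\pi}}_t^\top\mathbf{V}$ is a linear map of $\widetilde{\boldsymbol{\pi}}_t$ with $\mathbf{V}$ fixed under the conditioning. Pulling $\mathbf{V}$ out of the conditional expectation then gives
\[
\mathbb{E}[\widetilde{\mathbf{o}}_t\mid\boldsymbol{\pi}_t,\mathbf{V}]=\mathbb{E}[\widetilde{\boldsymbol{\pi}}_t\mid\boldsymbol{\pi}_t]^\top\mathbf{V}=\boldsymbol{\pi}_t^\top\mathbf{V}=\mathbf{o}_t,
\]
by \eqref{eq:det_attn_output}. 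Integrability is automatic here, because every variable involved takes finitely many values.

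For almost-sure recovery, I interpret $\nu\to\infty$ as a single infinite i.i.d.\ sequence $z_t^{(1)},z_t^{(2)},\ldots\sim\operatorname{Categorical}(\boldsymbol{\pi}_t)$, with $\widetilde{\boldsymbol{\pi}}_t$ built from its first $\nu$ terms. This coupling is exactly what "under the i.i.d.\ categorical representation" supplies, and fixing it first is the one point needing care. For a bare sequence of independent multinomial draws, one would only obtain convergence in probability, or would have to invoke Borel--Cantelli with fourth-moment bounds.

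With the coupling fixed, the vectors $\mathbf{e}_{z_t^{(m)}}$ are i.i.d.\ and bounded. Applying the strong law of large numbers coordinatewise gives, for each of the finitely many $j$, $(\widetilde{\boldsymbol{\pi}}_t)_j\to(\boldsymbol{\pi}_t)_j$ almost surely. Intersecting these finitely many probability-one events yields $\widetilde{\boldsymbol{\pi}}_t\to\boldsymbol{\pi}_t$ almost surely. Finally, $\mathbf{x}\mapsto\mathbf{x}^\top\mathbf{V}$ is continuous (in fact linear), so $\widetilde{\mathbf{o}}_t\to\mathbf{o}_t$ almost surely on the same event.
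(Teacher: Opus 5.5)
Your proposal is correct and follows the paper's proof essentially step for step. For the mean you use linearity of conditional expectation; the paper computes it from the multinomial mean $\mathbb{E}[\mathbf{W}_t\mid\boldsymbol{\pi}_t]=\nu\boldsymbol{\pi}_t$, which you include as a cross-check. For almost-sure recovery you use the same coupling through a single infinite i.i.d.\ categorical sequence, then the coordinatewise strong law over the finitely many $j$, and then continuity of $\mathbf{u}\mapsto\mathbf{u}^\top\mathbf{V}$.
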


The second property describes how the induced dispersion scales with $\nu$.

\begin{proposition}[Covariance scaling with $\nu$] \label{prop:variance}
Conditioned on $\boldsymbol{\pi}_t$, the stochastic attention weights satisfy
\begin{equation}
\operatorname{Cov}(\widetilde{\boldsymbol{\pi}}_t \mid \boldsymbol{\pi}_t)
=
\frac{1}{\nu}
\left(
\operatorname{diag}(\boldsymbol{\pi}_t)
-
\boldsymbol{\pi}_t\boldsymbol{\pi}_t^\top
\right).
\end{equation}
Consequently,
\begin{equation}
\operatorname{Cov}(\widetilde{\mathbf{o}}_t \mid \boldsymbol{\pi}_t,\mathbf{V})
=
\mathbf{V}^\top
\operatorname{Cov}(\widetilde{\boldsymbol{\pi}}_t \mid \boldsymbol{\pi}_t)
\mathbf{V} 
=
\frac{1}{\nu}
\mathbf{V}^\top
\left(
\operatorname{diag}(\boldsymbol{\pi}_t)
-
\boldsymbol{\pi}_t\boldsymbol{\pi}_t^\top
\right)
\mathbf{V}.
\end{equation}
\end{proposition}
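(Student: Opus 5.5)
We will use the i.i.d.\ categorical representation from Definition~\ref{def:stochastic_attention_weights}, writing $\widetilde{\boldsymbol{\pi}}_t=\frac1\nu\sum_{m=1}^\nu \mathbf{X}_m$ with $\mathbf{X}_m:=\mathbf{e}_{z_t^{(m)}}$. Conditionally on $\boldsymbol{\pi}_t$, the vectors $\mathbf{X}_1,\ldots,\mathbf{X}_\nu$ are i.i.d.\ one-hot vectors. Because the covariance of an average of $\nu$ i.i.d.\ vectors is $\frac1\nu$ times the covariance of a single summand, the whole task reduces to computing $\operatorname{Cov}(\mathbf{X}_1\mid\boldsymbol{\pi}_t)$. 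The output statement then follows by linearity.

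\textbf{Single-draw covariance.} We compute the second moment of one one-hot vector. The entry $(\mathbf{X}_1\mathbf{X}_1^\top)_{jk}=\mathbf{1}[z=j]\,\mathbf{1}[z=k]$ equals $\mathbf{1}[z=j]$ when $j=k$ and vanishes otherwise. Hence
\begin{equation*}
\mathbb{E}[\mathbf{X}_1\mathbf{X}_1^\top\mid\boldsymbol{\pi}_t]=\operatorname{diag}(\boldsymbol{\pi}_t).
\end{equation*}
By Proposition~\ref{prop:unbiased}, or directly, $\mathbb{E}[\mathbf{X}_1\mid\boldsymbol{\pi}_t]=\boldsymbol{\pi}_t$. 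Subtracting the outer product of the mean gives $\operatorname{Cov}(\mathbf{X}_1\mid\boldsymbol{\pi}_t)=\operatorname{diag}(\boldsymbol{\pi}_t)-\boldsymbol{\pi}_t\boldsymbol{\pi}_t^\top$. The draws are independent, so every cross-covariance $\operatorname{Cov}(\mathbf{X}_m,\mathbf{X}_{m'})$ with $m\neq m'$ vanishes. Therefore
\begin{equation*}
\operatorname{Cov}(\widetilde{\boldsymbol{\pi}}_t\mid\boldsymbol{\pi}_t)=\frac{1}{\nu^2}\sum_{m=1}^{\nu}\operatorname{Cov}(\mathbf{X}_m\mid\boldsymbol{\pi}_t)=\frac1\nu\bigl(\operatorname{diag}(\boldsymbol{\pi}_t)-\boldsymbol{\pi}_t\boldsymbol{\pi}_t^\top\bigr).
\end{equation*}
This also agrees with the standard multinomial covariance of $\mathbf{W}_t$ divided by $\nu^2$.

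\textbf{Output covariance.} The output is $\widetilde{\mathbf{o}}_t^\top=\mathbf{V}^\top\widetilde{\boldsymbol{\pi}}_t$, which is a linear image of $\widetilde{\boldsymbol{\pi}}_t$. We condition on $\mathbf{V}$ together with $\boldsymbol{\pi}_t$, so $\mathbf{V}$ is a fixed matrix. The rule $\operatorname{Cov}(A\mathbf{x})=A\operatorname{Cov}(\mathbf{x})A^\top$ with $A=\mathbf{V}^\top$ then yields the claimed formula. The only care needed, and the mild obstacle here, is the row/column convention: $\widetilde{\mathbf{o}}_t$ is a row vector, so its covariance is understood as that of the column vector $\widetilde{\mathbf{o}}_t^\top$. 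The randomness of the multinomial draw must also be independent of $\mathbf{V}$ given $\boldsymbol{\pi}_t$, which holds by construction.
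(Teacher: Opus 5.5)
Your proof is correct and has the same structure as the paper's: first the weight covariance, then propagation through the linear map $\mathbf{V}^\top$, using the same column-vector convention for $\widetilde{\mathbf{o}}_t$. The one difference is that the paper quotes the standard multinomial identity $\operatorname{Cov}(\mathbf{W}_t\mid\boldsymbol{\pi}_t)=\nu\bigl(\operatorname{diag}(\boldsymbol{\pi}_t)-\boldsymbol{\pi}_t\boldsymbol{\pi}_t^\top\bigr)$ and rescales it by $\nu^{-2}$, whereas you derive that identity from the i.i.d. one-hot decomposition, which makes your argument self-contained.
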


These identities make $\nu$ the natural operating parameter. At the attention operation, SA is centered on deterministic attention and its induced dispersion shrinks at the canonical $1/\nu$ rate. The value matrix $\mathbf{V}$ translates stochastic routing into output-space variability, so the resulting spread depends jointly on the learned token probabilities and the learned value representation.

This construction differs from single-draw hard attention or discrete stochastic attention schemes \cite{Heo2018}. Randomness is not locked to one sampled token or route. Normalized multinomial sampling gives a tunable empirical approximation to deterministic attention: dispersion can be increased or reduced at fixed learned attention scores by changing $\nu$. It also differs from dropout-style stochasticity, where randomness is imposed through independent masking rather than sampled from the token distribution computed by attention.
Algorithm~\ref{alg:stochastic_attention} summarizes the row-wise stochastic-attention computation in direct output form.

\begin{algorithm}[t]
\caption{Stochastic attention for one attention row}
\label{alg:stochastic_attention}
\small
\begin{algorithmic}[1]
\Require Deterministic attention weights $\boldsymbol{\pi}_t\in\Delta^{n_k-1}$; value matrix $\mathbf{V}\in\mathbb{R}^{n_k\times d_v}$; concentration parameter $\nu\in\mathbb{N}$.
\Ensure Stochastic attention output $\widetilde{\mathbf{o}}_t\in\mathbb{R}^{1\times d_v}$.

\For{$m=1$ \textbf{to} $\nu$}
    \State Sample $z_t^{(m)} \sim \mathrm{Categorical}(\boldsymbol{\pi}_t)$
\EndFor
\State Compute $\widetilde{\mathbf{o}}_t
=
\frac{1}{\nu}\sum_{m=1}^{\nu}\mathbf{v}_{z_t^{(m)}}$.
\State \Return $\widetilde{\mathbf{o}}_t$
\end{algorithmic}
\end{algorithm}

\subsection{Predictive distribution from repeated forward passes}

Let $f_{\boldsymbol{\theta}}$ denote the deterministic transformer and $f_{\boldsymbol{\theta},\nu}(\cdot)$ the same network with stochastic attention enabled. Given input $\mathbf{x}$, repeated stochastic forward passes generate Monte Carlo predictions
\begin{equation}
\widehat{\mathbf{y}}^{(m)}(\mathbf{x})
:=
f_{\boldsymbol{\theta},\nu}^{(m)}(\mathbf{x}),
\qquad
m=1,\ldots,M,
\label{eq:mc_predictions}
\end{equation}
which we treat as samples from the implicit predictive distribution induced by attention-space sampling in the frozen network. For scalar targets, the samples define an empirical predictive CDF $\widehat{F}_{\nu,\mathbf{x}}$; for vector-valued outputs, a sample-based joint distribution summarized coordinate-wise or through empirical covariance and prediction intervals.

The deterministic predictor $f_{\boldsymbol{\theta}}$ remains the reference around which SA's variability is centered. SA thus builds predictive variability from the internal attention mechanism without retraining the network or sampling a posterior over all weights. Two sampling quantities play distinct roles: $\nu$ defines the stochastic predictor by setting attention-space variability per pass, while $M$ controls the Monte Carlo accuracy with which summaries of the resulting predictive distribution are estimated. $\nu$ is therefore the method's operating parameter; $M$ is the evaluation budget.
Taken together, SA defines a one-parameter family of stochastic predictors $\{f_{\boldsymbol{\theta},\nu}\}_{\nu\in\Xi}$ around the fixed deterministic predictor $f_{\boldsymbol{\theta}}$, where $\Xi\subset\mathbb{N}$ is the candidate set for $\nu$.

\section{Choosing a calibrated operating point}
\label{sec:calibration}

Having defined the stochastic family $\{f_{\boldsymbol{\theta},\nu}\}_{\nu\in\Xi}$, we select its operating value by matching attention-induced stochastic variability to held-out residual scale. This makes $\nu$ a calibrated concentration parameter rather than a heuristic noise setting.

\subsection{Calibration objective}

We choose $\nu$ using a held-out calibration set
$\mathcal{D}_{\mathrm{cal}}
=
\{(\mathbf{x}_i, \mathbf{y}_i)\}_{i=1}^{N_{\mathrm{cal}}}$.
Let $f_{\boldsymbol{\theta}}(\mathbf{x})$ denote the deterministic predictor, and let $f_{\boldsymbol{\theta},\nu}(\mathbf{x})$ denote the random predictive output produced by one stochastic forward pass of the same frozen model under stochastic attention. Expectations and variances involving $f_{\boldsymbol{\theta},\nu}(\mathbf{x})$ are taken with respect to the internal stochastic-attention sampling.

For a calibration pair $(\mathbf{x}, \mathbf{y})$, define the induced stochastic deviation magnitude and deterministic residual magnitude as
\begin{equation}
R_\nu(\mathbf{x})
:=
\left\|
f_{\boldsymbol{\theta},\nu}(\mathbf{x})
-
f_{\boldsymbol{\theta}}(\mathbf{x})
\right\|_2,
\qquad
Z(\mathbf{x},\mathbf{y})
:=
\left\|
\mathbf{y}
-
f_{\boldsymbol{\theta}}(\mathbf{x})
\right\|_2.
\label{eq:z_xy}
\end{equation}
We then choose $\nu$ by minimizing the squared discrepancy between these two magnitudes:
\begin{equation}
\nu^\star
\in
\arg\min_{\nu\in\Xi}
L_{\mathrm{SA}}(\nu),
\quad\text{where}\quad
\mathcal{L}_{\mathrm{SA}}(\nu)
:=
\mathbb{E}_{(\mathbf{x},\mathbf{y})\sim\mathcal{D}_{\mathrm{cal}}}
\left[
\mathbb{E}
\left[
\left(R_\nu(\mathbf{x})-Z(\mathbf{x},\mathbf{y})\right)^2
\,\middle|\,
\mathbf{x},\mathbf{y}
\right]
\right].
\label{eq:lsa}
\end{equation}
The objective compares two quantities in the same output space: the stochastic deviation induced by SA from the deterministic prediction, and the realized residual of that deterministic prediction. It therefore calibrates the scale of attention-induced predictive variability against held-out prediction error.

\subsection{Interpretation}

Because $Z(\mathbf{x},\mathbf{y})$ is fixed once the calibration pair $(\mathbf{x},\mathbf{y})$ is fixed, the inner expectation in $\mathcal{L}_{\mathrm{SA}}(\nu)$ decomposes as
\begin{equation}
\mathcal{L}_{\mathrm{SA}}(\nu)
=
\mathbb{E}_{(\mathbf{x},\mathbf{y})\sim\mathcal{D}_{\mathrm{cal}}}
\left[
\operatorname{Var}\!\left(R_\nu(\mathbf{x})\mid \mathbf{x},\mathbf{y}\right)
+
\left(
\mathbb{E}\!\left[R_\nu(\mathbf{x})\mid \mathbf{x},\mathbf{y}\right]
-
Z(\mathbf{x},\mathbf{y})
\right)^2
\right].
\label{eq:lsa_bias_variance}
\end{equation}
This decomposition gives the objective a direct calibration meaning. The squared-bias term matches the expected magnitude of stochastic-attention deviation to the residual magnitude observed in held-out data. The variance term penalizes unnecessary variability in that induced deviation. Thus, $L_{\mathrm{SA}}$ is not a generic noise-tuning rule; it is a dispersion-matching criterion for the predictive variability generated by stochastic attention. The selected $\nu^\star$ is therefore the operating value whose induced stochastic deviations match the held-out residual scale.

\subsection{Optimization and computational role of $\nu$}

In practice, $\mathcal{L}_{\mathrm{SA}}(\nu)$ is evaluated on the held-out calibration split $\mathcal{D}_{\mathrm{cal}}$. For each candidate $\nu\in\Xi$, we estimate the inner expectation using repeated stochastic forward passes of the frozen model.
When all SA layers share a single concentration parameter, $\mathcal{L}_{\mathrm{SA}}(\nu)$becomes a one-dimensional calibration problem over $\Xi$. We optimize it using Bayesian optimization (BO) under uncertainty, following the routine of \citet{Yadav2025sobo}, which fits a Bayesian generalized linear model surrogate to noisy evaluations and proposes the next candidate $\nu$. Full details are deferred to the appendix. In practice, each BO iteration evaluates one proposed $\nu$ on mini-batches from $\mathcal{D}_{\mathrm{cal}}$ using $M$ stochastic forward passes per batch. Combined with the one-dimensional search and analytic proposal step, this makes $\nu$-selection fast in practice and far cheaper than retraining-heavy uncertainty baselines.

\section{Experimental design}
\label{sec:experiments}

\begin{figure}[!t]
\centering
\includegraphics[width=0.85\linewidth]{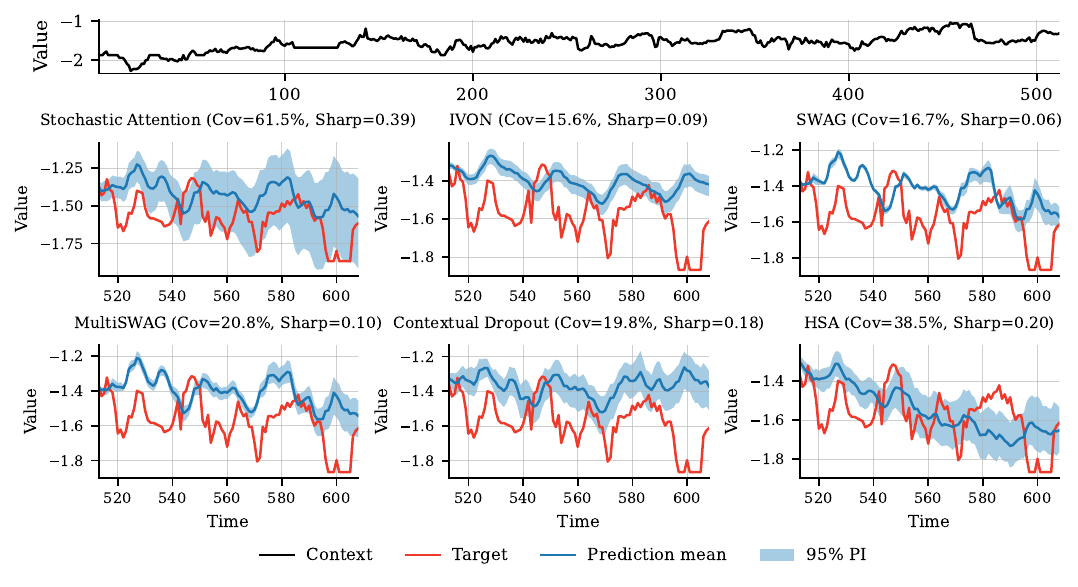}
\caption{Native uncertainty on TimesFM (ETTh1, $H{=}96$). SA achieves an empirical coverage of $61.5\%$, closer to the $95\%$ target than any baseline (IVON $15.6\%$, SWAG $16.7\%$, MultiSWAG $20.8\%$, Contextual Dropout $19.8\%$, HSA $38.5\%$). Full results across all eight ETT configurations are in Appendix~\ref{app:timesfm-details}.}
\label{fig:timesfm_diagnostic}
\end{figure}

\subsection{Models, tasks, and baselines}

We evaluate SA in three transformer-based prediction settings. TimesFM on the ETT-small benchmarks \cite{Das2024,Zhou2021} is a diagnostic time-series forecasting setting in a difficult low-data adaptation regime. ClimaX \cite{NguyenClimaX23} is a structured scientific forecasting case: 72-hour global $Z_{500}$ forecasts on gridded ERA5 fields. FT-Transformer \cite{Gorishniy2021} on eight UCI scientific and engineering regression tasks (Concrete, Energy, Naval, Yacht, Protein, Wine, Kin8nm, Power Plant) under the standard Gal-style protocol \cite{Gal2016} provides an independent transformer-regression validation. Architecture, adaptation, split, and implementation details are in Appendix~\ref{app:exp-details}.

We compare against baselines representing different sources of predictive variability. SWAG and IVON provide weight-space uncertainty through checkpoint-based posterior approximation and variational optimization, respectively \cite{Maddox2019,Shen2024}; MultiSWAG provides an ensemble-style SWAG comparison built from multiple independently trained components \cite{Wilson2020}. MC Dropout \cite{Gal2016} and Contextual Dropout \cite{Fan2021} provide stochastic forward-pass baselines, and Hierarchical Stochastic Attention (HSA) \cite{Pei2022} is the closest attention-level baseline since it also modifies attention, but through a training-stage stochastic architecture. Implementation details are in Appendix~\ref{app:baselines}.

\subsection{Metrics, diagnostics, and cost}

We evaluate four quantities: accuracy, calibration, sharpness, and cost. Accuracy uses the task-standard metric (MAE for TimesFM, latitude-weighted anomaly correlation for ClimaX, normalized RMSE for UCI). Calibration is assessed through PIT behavior, empirical coverage, and Wasserstein--1 distance between the empirical PIT and the uniform distribution on $[0,1]$. PIT is computed from the empirical predictive CDF for scalar targets, and coordinate-wise then aggregated across spatial locations for structured ClimaX outputs. Under exact calibration PIT values are uniform, with U-shaped histograms indicating under-dispersion and hump-shaped histograms over-dispersion \cite{Dawid1984,Gneiting2007}.

Sharpness is summarized through prediction-interval width; because it is only meaningful at comparable calibration \cite{Gneiting2007,Kuleshov2022}, we report native calibration first and, when methods differ substantially in coverage, apply scalar post-scaling on the calibration set via temperature scaling \cite{Guo2017} or conformal prediction \cite{Vovk2005} before comparing widths. Cost is reported as the wall-clock compute required to obtain each method's predictive distribution, since retraining-heavy and inference-time mechanisms do not occupy the same deployment regime even on identical tasks. Setting-specific computational details are in Appendix~\ref{app:exp-details}.

\section{Results}
\label{sec:results_forecast}

\subsection{TimesFM: diagnostic evidence across ETT}

Figure~\ref{fig:timesfm_diagnostic} shows native predictive intervals for all six methods on a representative TimesFM configuration (ETTh1, $H{=}96$). SA achieves an empirical coverage of $61.5\%$, closer to the $95\%$ target than any baseline: IVON $15.6\%$, SWAG $16.7\%$, MultiSWAG $20.8\%$, Contextual Dropout $19.8\%$, HSA $38.5\%$. Baselines miss the nominal target by 57 to 79 percentage points, against SA's 34. Narrow intervals that miss most realized outcomes are not sharp; they are overconfident, and the figure makes that plain. SA's empirical coverage is consistently closer to target than every baseline's across all eight ETT configurations (Appendix~\ref{app:timesfm-details}). Coverage alone does not establish calibration, but a pattern this stable across the benchmark is the diagnostic that carries forward.

To compare sharpness on a common footing, baselines are rescaled to SA's empirical coverage on a held-out split via conformal prediction \cite{Vovk2005}, and mean interval widths are then compared at matched coverage (Appendix~\ref{app:timesfm-details}). At matched coverage, SA and MultiSWAG are the two methods consistently in the top tier across configurations: each ranks among the two narrowest on at least half the eight, while other baselines reach top-tier sharpness only sporadically. Absolute differences across methods are modest in this regime, but SA's competitiveness with the strongest ensemble-style baseline is already in view. The remaining two settings test whether it holds at scale, where backbones are stronger and the demands on calibrated uncertainty are higher.

\subsection{ClimaX: scientific forecasting and cost scalability}

\begin{figure}[t]
\centering
\begin{subfigure}[t]{0.49\linewidth}
\centering
\includegraphics[width=\linewidth]{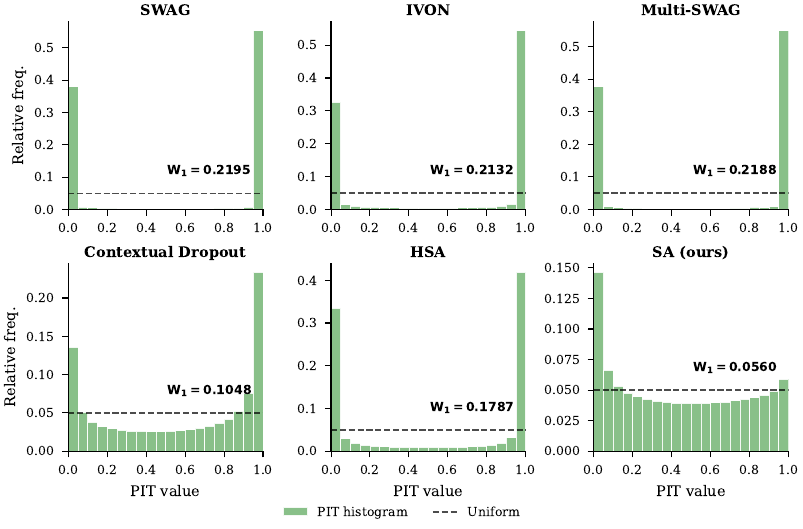}
\caption{Native predictions.}
\label{fig:climax_pit_unscaled}
\end{subfigure}
\hfill
\begin{subfigure}[t]{0.49\linewidth}
\centering
\includegraphics[width=\linewidth]{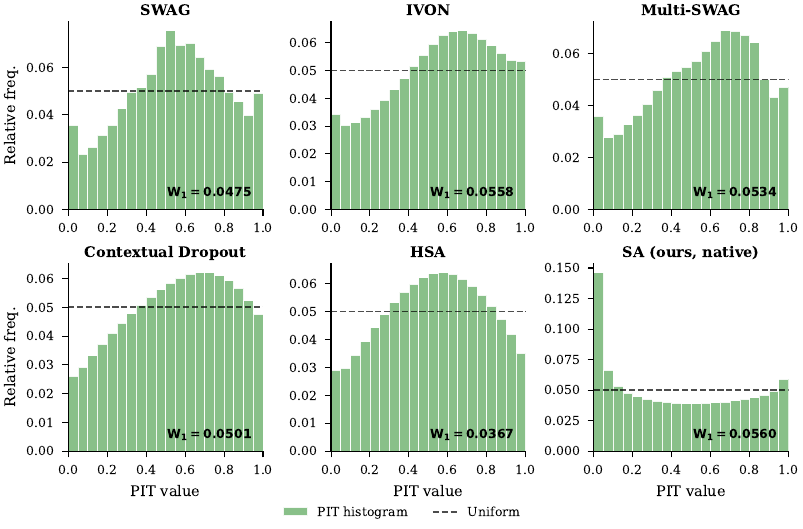}
\caption{Baselines temperature-scaled; SA native.}
\label{fig:climax_pit_scaled}
\end{subfigure}
\caption{PIT calibration on ClimaX (72-hour $Z_{500}$). $W_1$: Wasserstein-1 from uniform (lower$=$better). \textbf{(a)}~SA approaches uniformity natively ($W_1{=}0.056$); all baselines show U-shaped overconfidence ($W_1{=}0.10$--$0.22$). \textbf{(b)}~After temperature scaling, baselines improve to $W_1{=}0.04$--$0.06$; SA's native calibration is already comparable without correction.}
\label{fig:climax_pit_combined}
\end{figure}

Figure~\ref{fig:climax_pit_combined}(a) shows native PIT calibration on ClimaX 72-hour $Z_{500}$ forecasts. SA achieves $W_1{=}0.056$, near-uniform; every baseline shows a severely U-shaped distribution with $W_1$ between $0.10$ (Contextual Dropout) and $0.22$ (SWAG). The under-dispersion observed on TimesFM persists at higher dimensionality and across all five baselines. After temperature scaling on a held-out split, baseline $W_1$ values fall into the $0.04$--$0.06$ range (Figure~\ref{fig:climax_pit_combined}b). SA's native calibration is already inside that range, so the advantage is that SA reaches comparable calibration natively, without an external correction layer.

Both SA and HSA inject stochasticity at the attention bottleneck, but HSA does so during training, and on ClimaX this disrupts the pretrained spatial routing: anomaly correlation drops to $0.7476$, a $22\%$ degradation from the deterministic backbone ($0.964$). SA avoids this failure mode: mean preservation (Proposition~\ref{prop:unbiased}) centers the predictive distribution on the deterministic predictor, which remains available unchanged.
Table~\ref{tab:climax_summary} reports PI-95 widths at comparable calibration. SA produces the narrowest intervals on ClimaX; MultiSWAG is the closest at ${\approx}1.02\times$ SA's width, and HSA's accuracy degradation forces its intervals to ${\approx}2.74\times$. Per-method distributions of normalized PI-95 widths are visualized in Appendix~\ref{app:climax-additional}.

\begin{figure}[!htb]
\centering

\begin{minipage}[c]{0.58\linewidth}
\centering
\captionof{table}{ClimaX calibration and sharpness. $W_1$: distance from uniform PIT; PI-95: mean interval width. SA native, baselines temperature-scaled. Lower is better in all columns.}
\label{tab:climax_summary}
\small
\setlength{\tabcolsep}{4pt}
\begin{tabular}{@{}lcccc@{}}
\toprule
Method & ACC $\uparrow$ & Native $W_1\!\downarrow$ & Scaled $W_1\!\downarrow$ & PI-95 $\downarrow$ \\
\midrule
SA (ours, $\nu{=}4$) & 0.9022 & \textbf{0.0560} & 0.0560 & \textbf{1051.9} \\
Cxt.\ Dropout       & 0.9599 & 0.1048 & 0.0501 & 1080.9 \\
MultiSWAG           & \textbf{0.9648} & 0.2188 & 0.0534 & 1075.5 \\
SWAG                & 0.9648 & 0.2195 & \textbf{0.0475} & 1167.1 \\
IVON                & 0.9381 & 0.2132 & 0.0558 & 1422.0 \\
HSA                 & 0.7476 & 0.1787 & 0.0367 & 2882.8 \\
\bottomrule
\end{tabular}
\end{minipage}
\hfill
\begin{minipage}[c]{0.40\linewidth}
\centering
\includegraphics[width=\linewidth]{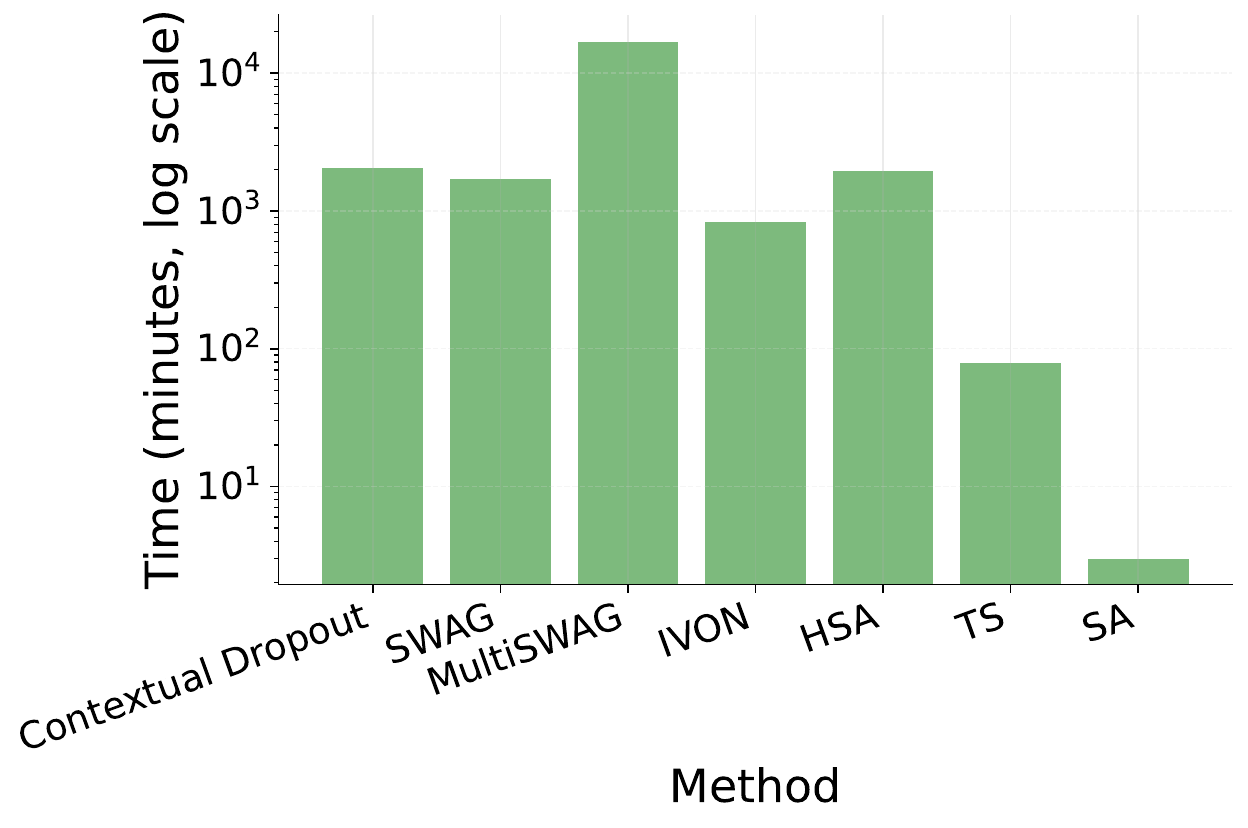}
\captionof{figure}{Total method cost (log-scale) for ClimaX. SA uses inference passes on a frozen backbone. MultiSWAG, the closest competitor on sharpness, requires ten independent training runs.}
\label{fig:climax_cost}
\end{minipage}

\end{figure}

Figure~\ref{fig:climax_cost} reports total wall-clock cost on a single V100-SXM2-32GB: SWAG needs ${\sim}29$ hours of trajectory collection; 10-seed MultiSWAG needs ${\sim}12$ days of aggregate compute; SA's BO calibration completes in 1--3 minutes on the frozen backbone. At ClimaX scale (${\sim}108$M parameters) the cost of retraining-heavy UQ is already a serious commitment; at billion- and trillion-scale backbones, it becomes inaccessible. What remains feasible at that frontier is post-hoc inference-time sampling on a frozen backbone: the regime SA is built for.

\subsection{UCI regression with FT-Transformer}
\label{sec:results_uci}

We evaluate SA on eight UCI regression datasets using a Feature Tokenizer Transformer \cite{Gorishniy2021} trained from scratch on each dataset, in contrast to the finetuning setting of TimesFM and ClimaX. We follow the standard Gal-style protocol \cite{Gal2016}: 20 random 90/10 splits per dataset, 5 for Protein, with all six methods evaluated on the same backbone, splits, and predictive-sample budgets. Full hyperparameters and split details are in Appendix~\ref{app:exp-details}.

\begin{table*}[!ht]
\caption{UCI regression across eight datasets. \textbf{Acc.}: SA's deterministic-mean RMSE/$\sigma_y$. \textbf{Panel~A}: native pooled-PIT $W_1$. \textbf{Panel~B}: mean PI-95 width normalized to SA's per-dataset width; baselines post-scaled to comparable coverage, SA native. Lower is better in all panels.}
\label{tab:uci_master}
\centering
\setlength{\tabcolsep}{4pt}
\resizebox{\textwidth}{!}{%
\begin{tabular}{@{}l c c cccccc c cccccc@{}}
\toprule
& \textbf{Acc.} & & \multicolumn{6}{c}{\textbf{Panel A: Native $W_1$ $\downarrow$}} & & \multicolumn{6}{c}{\textbf{Panel B: Sharp/SA $\downarrow$ (SA\,${=}$\,1.0)}} \\
\cmidrule(lr){2-2} \cmidrule(lr){4-9} \cmidrule(l){11-16}
Dataset & RMSE/$\sigma_y$ & & SA & MC & SW & MS & HSA & IV & & SA & MC & SW & MS & HSA & IV \\
\midrule
Concrete & 0.338 & & \textbf{0.051} & 0.092 & 0.137 & 0.115 & 0.093 & 0.220 & & \textbf{1.000} & 1.395 & 1.443 & 1.306 & 1.881 & 1.452 \\
Energy   & 0.068 & & \textbf{0.030} & 0.039 & 0.071 & 0.042 & 0.036 & 0.159 & & \textbf{1.000} & 1.366 & 1.373 & 1.211 & 3.012 & 1.214 \\
Kin8nm   & 0.280 & & \textbf{0.070} & 0.111 & 0.110 & 0.087 & 0.085 & 0.197 & & \textbf{1.000} & 1.370 & 1.501 & 1.393 & 1.612 & 1.548 \\
Naval    & 0.173 & & 0.022 & \textbf{0.020} & 0.053 & 0.028 & 0.073 & 0.183 & & 1.000 & 1.328 & 3.378 & 3.018 & \textbf{0.451} & 2.158 \\
Power    & 0.239 & & \textbf{0.093} & 0.162 & 0.176 & 0.169 & 0.196 & 0.215 & & \textbf{1.000} & 1.730 & 1.985 & 1.919 & 2.041 & 1.846 \\
Protein  & 0.657 & & 0.163 & \textbf{0.133} & 0.148 & 0.140 & 0.154 & 0.228 & & \textbf{1.000} & 1.892 & 2.149 & 2.090 & 2.066 & 2.451 \\
Wine     & 0.906 & & 0.189 & 0.201 & 0.196 & \textbf{0.177} & 0.208 & 0.246 & & \textbf{1.000} & 2.808 & 3.214 & 3.025 & 2.922 & 2.829 \\
Yacht    & 0.050 & & \textbf{0.021} & 0.038 & 0.037 & 0.070 & 0.061 & 0.103 & & \textbf{1.000} & 1.301 & 1.535 & 1.308 & 1.913 & 1.471 \\
\bottomrule
\end{tabular}%
}
\end{table*}

Table~\ref{tab:uci_master} (Panel~A) reports native PIT calibration. SA is the best on 5 of 8 datasets, and on the remaining three is within a narrow margin of the best. Naval is essentially tied with MC Dropout ($W_1{=}0.022$ vs.\ $0.020$); on Wine and Protein, where the FT-Transformer's normalized RMSE rises to $0.91$ and $0.66$ (the two highest in the benchmark), SA remains competitive but not the leader, a property shared by single-parameter post-hoc calibration methods.

After temperature scaling baselines to comparable calibration, SA produces the narrowest intervals on 7 of 8 datasets; on those seven, the closest calibrated baseline is roughly $1.2\times$ to $2.8\times$ wider than SA. The single exception is Naval, where HSA's lower point-prediction error translates into tighter scaled intervals. HSA's accuracy is competitive in this from-scratch setting because there is no pretrained backbone for its training-stage stochasticity to disrupt, which is the source of the accuracy degradation observed on ClimaX. Per-dataset PIT histograms and sharpness comparisons are in Appendix~\ref{app:uci-per-dataset}.

Similar to ClimaX, for all eight independent UCI regression tasks, SA reaches native calibration without external correction and stays the sharpest at comparable calibration. These results also clarify SA's operating regime: SA is strongest where the deterministic backbone is accurate, the deployment regime scientific foundation models occupy in practice.

\section{Operating-point analyses}
\label{sec:operating-point}

We use proper scores here as operating-point diagnostics on the SA family (Section~\ref{sec:lens}), evaluated on ClimaX, the highest-dimensional and most calibration-demanding setting.

Table~\ref{tab:climax_crps_decomp} (Appendix~\ref{app:climax-additional}) reports the CRPS decomposition. At the calibration-first $\nu{=}4$, SA's raw CRPS is $221.68$, fourth among the methods in the table. The decomposition shows that the gap is in the prediction-error term, not the distributional component: SA has by far the largest spread reward ($146.12$, against $5$--$61$ for baselines), while SWAG and MultiSWAG achieve low CRPS with spread rewards of ${\sim}5$--$6$, indicating near-deterministic ensembles. Under raw CRPS these methods rank well by collapsing uncertainty rather than by calibrating it. This decomposition motivates a complementary question: what if $\nu$ is chosen to minimize CRPS directly, rather than the calibration objective? Sweeping $\nu$ on the same calibration set yields a score-oriented optimum at $\nu{=}25$. At $\nu{=}25$, SA achieves CRPS$\,{=}\,135.42$ and Energy Score$\,{=}\,209.68$, the best values among all methods in Table~\ref{tab:climax_crps_decomp}, while SA's stochastic-sample accuracy rises to match SWAG-BMA's. Both proper scores are jointly minimized at $\nu^*{=}25$ (Appendix~\ref{app:nu-ablation}).

The two operating points are interpretable consequences of their criteria. Calibration-first $\nu{=}4$ minimizes the dispersion-matching loss (Eq.~\ref{eq:lsa}), giving near-uniform PIT ($W_1{=}0.056$); score-oriented $\nu{=}25$ minimizes CRPS by tightening the per-sample predictive distribution at the cost of calibration, with the PIT becoming less uniform ($W_1{=}0.085$). Both lie on the same SA family, indexed by the single concentration $\nu$. Calibration-first and score-optimal criteria do not select contradictory mechanisms; they select different points on the same controllable curve. The corresponding CRPS evaluation on UCI regression is in Appendix~\ref{app:uci-crps}, with a ranking pattern consistent with the ClimaX decomposition. Robustness of the calibration comparison to SWAG variance-scale tuning and to $\nu$-selection sensitivity is addressed in Appendices~\ref{app:swag-sweep} and~\ref{app:bo-sensitivity}.

\section{Conclusion}

We introduced stochastic attention (SA), an inference-time mechanism that replaces softmax attention with a normalized Multinomial sample average and selects its single concentration parameter through a held-out calibration objective. Across time-series forecasting, gridded weather forecasting, and tabular regression, SA achieves strong native calibration without external correction, the sharpest or among the sharpest intervals at comparable coverage, and substantially lower deployment cost than retraining-heavy alternatives; the same one-parameter family further supports score-oriented operating points on the same curve. Two limitations bound this work, each addressable through a direct extension. First, $\nu$ is a single global concentration shared across heads and layers, which constrains expressivity;  layer- or head-wise $\nu$ would replace it with a richer parameterization at the cost of additional calibration complexity. Second, the calibration objective is stated for continuous regression-style residuals, with classification and other structured targets reached through an appropriate residual functional. Neither extension changes the central claim. The single-parameter form follows the same design pattern as established post-hoc calibration methods such as temperature scaling \cite{Guo2017}: a deliberate choice motivated by simplicity and robustness that scale cleanly to large pretrained models. SA is therefore principled, flexible across operating points, and applicable to the regimes where calibrated uncertainty matters most for transformer-based scientific foundation models. By making calibrated uncertainty cheaper to obtain on pretrained scientific models, this work is geared toward trustworthy and reliable AI deployment in scientific applications; we are not aware of immediate negative societal risks specific to the method.

\bibliographystyle{plainnat}
\bibliography{references}

\begin{thebibliography}{26}
\providecommand{\natexlab}[1]{#1}
\providecommand{\url}[1]{\texttt{#1}}
\expandafter\ifx\csname urlstyle\endcsname\relax
  \providecommand{\doi}[1]{doi: #1}\else
  \providecommand{\doi}{doi: \begingroup \urlstyle{rm}\Url}\fi

\bibitem[Cong et~al.(2024)Cong, Daheim, Shen, Cremers, Yokota, Khan, and
  M{\"o}llenhoff]{Cong2024}
Bai Cong, Nico Daheim, Yuesong Shen, Daniel Cremers, Rio Yokota,
  Mohammad~Emtiyaz Khan, and Thomas M{\"o}llenhoff.
\newblock Variational low-rank adaptation using {IVON}.
\newblock In \emph{NeurIPS 2024 Workshop on Fine-Tuning in Modern Machine
  Learning: Principles and Scalability}, 2024.
\newblock URL \url{https://openreview.net/forum?id=nRD5uZa2fe}.

\bibitem[Das et~al.(2024)Das, Kong, Sen, and Zhou]{Das2024}
Abhimanyu Das, Weihao Kong, Rajat Sen, and Yichen Zhou.
\newblock A decoder-only foundation model for time-series forecasting.
\newblock In \emph{Forty-first International Conference on Machine Learning},
  2024.
\newblock URL \url{https://openreview.net/forum?id=jn2iTJas6h}.

\bibitem[Dawid(1984)]{Dawid1984}
A.~P. Dawid.
\newblock Present position and potential developments: Some personal views
  statistical theory the prequential approach.
\newblock \emph{Journal of the Royal Statistical Society: Series A (General)},
  147\penalty0 (2):\penalty0 278--290, 1984.
\newblock \doi{https://doi.org/10.2307/2981683}.
\newblock URL
  \url{https://rss.onlinelibrary.wiley.com/doi/abs/10.2307/2981683}.

\bibitem[Fan et~al.(2021{\natexlab{a}})Fan, Zhang, Tanwisuth, Qian, and
  Zhou]{Fan2021}
Xinjie Fan, Shujian Zhang, Korawat Tanwisuth, Xiaoning Qian, and Mingyuan Zhou.
\newblock Contextual dropout: An efficient sample-dependent dropout module.
\newblock In \emph{International Conference on Learning Representations},
  2021{\natexlab{a}}.
\newblock URL \url{https://openreview.net/forum?id=ct8_a9h1M}.

\bibitem[Fan et~al.(2021{\natexlab{b}})Fan, Zhang, Tanwisuth, Qian, and
  Zhou]{XINJIE2021}
Xinjie Fan, Shujian Zhang, Korawat Tanwisuth, Xiaoning Qian, and Mingyuan Zhou.
\newblock Contextual dropout: An efficient sample-dependent dropout module.
\newblock In \emph{International Conference on Learning Representations},
  2021{\natexlab{b}}.
\newblock URL \url{https://openreview.net/forum?id=ct8_a9h1M}.

\bibitem[Gal and Ghahramani(2016)]{Gal2016}
Yarin Gal and Zoubin Ghahramani.
\newblock Dropout as a bayesian approximation: Representing model uncertainty
  in deep learning.
\newblock In Maria~Florina Balcan and Kilian~Q. Weinberger, editors,
  \emph{Proceedings of The 33rd International Conference on Machine Learning},
  volume~48 of \emph{Proceedings of Machine Learning Research}, pages
  1050--1059, New York, New York, USA, 20--22 Jun 2016. PMLR.
\newblock URL \url{https://proceedings.mlr.press/v48/gal16.html}.

\bibitem[Gneiting et~al.(2007)Gneiting, Balabdaoui, and Raftery]{Gneiting2007}
Tilmann Gneiting, Fadoua Balabdaoui, and Adrian~E. Raftery.
\newblock Probabilistic forecasts, calibration and sharpness.
\newblock \emph{Journal of the Royal Statistical Society Series B: Statistical
  Methodology}, 69\penalty0 (2):\penalty0 243--268, 03 2007.
\newblock ISSN 1369-7412.
\newblock \doi{10.1111/j.1467-9868.2007.00587.x}.
\newblock URL \url{https://doi.org/10.1111/j.1467-9868.2007.00587.x}.
\newblock Originally presented at Workshop on Ensemble Forecasting, 2003, with
  title "Verifying probabilistic forecasts: Calibration and sharpness".
  Received 2005-05-01; accepted 2006-10-01; published 2007-03-05.

\bibitem[Gorishniy et~al.(2021)Gorishniy, Rubachev, Khrulkov, and
  Babenko]{Gorishniy2021}
Yury Gorishniy, Ivan Rubachev, Valentin Khrulkov, and Artem Babenko.
\newblock Revisiting deep learning models for tabular data.
\newblock In M.~Ranzato, A.~Beygelzimer, Y.~Dauphin, P.S. Liang, and J.~Wortman
  Vaughan, editors, \emph{Advances in Neural Information Processing Systems},
  volume~34, pages 18932--18943. Curran Associates, Inc., 2021.
\newblock URL
  \url{https://proceedings.neurips.cc/paper_files/paper/2021/file/9d86d83f925f2149e9edb0ac3b49229c-Paper.pdf}.

\bibitem[Gruber and Buettner(2022)]{Gruber2022}
Sebastian Gruber and Florian Buettner.
\newblock Better uncertainty calibration via proper scores for classification
  and beyond.
\newblock In S.~Koyejo, S.~Mohamed, A.~Agarwal, D.~Belgrave, K.~Cho, and A.~Oh,
  editors, \emph{Advances in Neural Information Processing Systems}, volume~35,
  pages 8618--8632. Curran Associates, Inc., 2022.

\bibitem[Guo et~al.(2017)Guo, Pleiss, Sun, and Weinberger]{Guo2017}
Chuan Guo, Geoff Pleiss, Yu~Sun, and Kilian~Q. Weinberger.
\newblock On calibration of modern neural networks.
\newblock In Doina Precup and Yee~Whye Teh, editors, \emph{Proceedings of the
  34th International Conference on Machine Learning}, volume~70 of
  \emph{Proceedings of Machine Learning Research}, pages 1321--1330. PMLR,
  06--11 Aug 2017.
\newblock URL \url{https://proceedings.mlr.press/v70/guo17a.html}.

\bibitem[Heo et~al.(2018)Heo, Lee, Kim, Lee, Kim, Yang, and Hwang]{Heo2018}
Jay Heo, Hae~Beom Lee, Saehoon Kim, Juho Lee, Kwang~Joon Kim, Eunho Yang, and
  Sung~Ju Hwang.
\newblock Uncertainty-aware attention for reliable interpretation and
  prediction.
\newblock In S.~Bengio, H.~Wallach, H.~Larochelle, K.~Grauman, N.~Cesa-Bianchi,
  and R.~Garnett, editors, \emph{Advances in Neural Information Processing
  Systems}, volume~31. Curran Associates, Inc., 2018.
\newblock URL
  \url{https://proceedings.neurips.cc/paper_files/paper/2018/file/285e19f20beded7d215102b49d5c09a0-Paper.pdf}.

\bibitem[Hu et~al.(2022)Hu, yelong shen, Wallis, Allen-Zhu, Li, Wang, Wang, and
  Chen]{Hu2022}
Edward~J Hu, yelong shen, Phillip Wallis, Zeyuan Allen-Zhu, Yuanzhi Li, Shean
  Wang, Lu~Wang, and Weizhu Chen.
\newblock Lo{RA}: Low-rank adaptation of large language models.
\newblock In \emph{International Conference on Learning Representations}, 2022.

\bibitem[Izmailov et~al.(2018)Izmailov, Podoprikhin, Garipov, Vetrov, and
  Wilson]{Izmailov2018}
Pavel Izmailov, Dmitrii Podoprikhin, Timur Garipov, Dmitry~P. Vetrov, and
  Andrew~Gordon Wilson.
\newblock Averaging weights leads to wider optima and better generalization.
\newblock In Amir Globerson and Ricardo Silva, editors, \emph{Proceedings of
  the Thirty-Fourth Conference on Uncertainty in Artificial Intelligence},
  pages 876--885, Monterey, California, USA, August 2018. AUAI Press.

\bibitem[Kuleshov and Deshpande(2022)]{Kuleshov2022}
Volodymyr Kuleshov and Shachi Deshpande.
\newblock Calibrated and sharp uncertainties in deep learning via density
  estimation.
\newblock In Kamalika Chaudhuri, Stefanie Jegelka, Le~Song, Csaba Szepesvari,
  Gang Niu, and Sivan Sabato, editors, \emph{Proceedings of the 39th
  International Conference on Machine Learning}, volume 162 of
  \emph{Proceedings of Machine Learning Research}, pages 11683--11693. PMLR,
  17--23 Jul 2022.
\newblock URL \url{https://proceedings.mlr.press/v162/kuleshov22a.html}.

\bibitem[Kuleshov et~al.(2018)Kuleshov, Fenner, and Ermon]{Kuleshov2018}
Volodymyr Kuleshov, Nathan Fenner, and Stefano Ermon.
\newblock Accurate uncertainties for deep learning using calibrated regression.
\newblock In Jennifer Dy and Andreas Krause, editors, \emph{Proceedings of the
  35th International Conference on Machine Learning}, volume~80 of
  \emph{Proceedings of Machine Learning Research}, pages 2796--2804. PMLR,
  10--15 Jul 2018.
\newblock URL \url{https://proceedings.mlr.press/v80/kuleshov18a.html}.

\bibitem[Lakshminarayanan et~al.(2017)Lakshminarayanan, Pritzel, and
  Blundell]{Lakshminarayanan2017}
Balaji Lakshminarayanan, Alexander Pritzel, and Charles Blundell.
\newblock Simple and scalable predictive uncertainty estimation using deep
  ensembles.
\newblock In I.~Guyon, U.~Von Luxburg, S.~Bengio, H.~Wallach, R.~Fergus,
  S.~Vishwanathan, and R.~Garnett, editors, \emph{Advances in Neural
  Information Processing Systems}, volume~30. Curran Associates, Inc., 2017.
\newblock URL
  \url{https://proceedings.neurips.cc/paper_files/paper/2017/file/9ef2ed4b7fd2c810847ffa5fa85bce38-Paper.pdf}.

\bibitem[Maddox et~al.(2019)Maddox, Izmailov, Garipov, Vetrov, and
  Wilson]{Maddox2019}
Wesley~J Maddox, Pavel Izmailov, Timur Garipov, Dmitry~P Vetrov, and
  Andrew~Gordon Wilson.
\newblock A simple baseline for {B}ayesian uncertainty in deep learning.
\newblock In \emph{Advances in Neural Information Processing Systems},
  volume~32, pages 13153--13164. Curran Associates, Inc., 2019.

\bibitem[Nguyen et~al.(2023)Nguyen, Brandstetter, Kapoor, Gupta, and
  Grover]{NguyenClimaX23}
Tung Nguyen, Johannes Brandstetter, Ashish Kapoor, Jayesh~K. Gupta, and Aditya
  Grover.
\newblock Climax: A foundation model for weather and climate.
\newblock In \emph{ICML}, pages 25904--25938, 2023.

\bibitem[Onal et~al.(2024)Onal, Fl{\"o}ge, Caldwell, Sheverdin, and
  Fortuin]{Onal2024}
Emre Onal, Klemens Fl{\"o}ge, Emma Caldwell, Arsen Sheverdin, and Vincent
  Fortuin.
\newblock Gaussian stochastic weight averaging for bayesian low-rank adaptation
  of large language models.
\newblock In \emph{Sixth Symposium on Advances in Approximate Bayesian
  Inference - Non Archival Track}, 2024.
\newblock URL \url{https://openreview.net/forum?id=LZrCBQBCzl}.

\bibitem[Pei et~al.(2022)Pei, Wang, and Szarvas]{Pei2022}
Jiahuan Pei, Cheng Wang, and György Szarvas.
\newblock Transformer uncertainty estimation with hierarchical stochastic
  attention.
\newblock \emph{Proceedings of the AAAI Conference on Artificial Intelligence},
  36\penalty0 (10):\penalty0 11147--11155, Jun. 2022.
\newblock \doi{10.1609/aaai.v36i10.21364}.
\newblock URL \url{https://ojs.aaai.org/index.php/AAAI/article/view/21364}.

\bibitem[Shen et~al.(2024)Shen, Daheim, Cong, Nickl, Marconi, Raoul, Yokota,
  Gurevych, Cremers, Khan, and M\"{o}llenhoff]{Shen2024}
Yuesong Shen, Nico Daheim, Bai Cong, Peter Nickl, Gian~Maria Marconi, Bazan
  Clement Emile~Marcel Raoul, Rio Yokota, Iryna Gurevych, Daniel Cremers,
  Mohammad~Emtiyaz Khan, and Thomas M\"{o}llenhoff.
\newblock Variational learning is effective for large deep networks.
\newblock In \emph{Proceedings of the 41st International Conference on Machine
  Learning}, volume 235 of \emph{Proceedings of Machine Learning Research},
  pages 44665--44686. PMLR, 21--27 Jul 2024.

\bibitem[Vovk et~al.(2005)Vovk, Gammerman, and Shafer]{Vovk2005}
Vladimir Vovk, Alexander Gammerman, and Glenn Shafer.
\newblock \emph{Algorithmic Learning in a Random World}.
\newblock Springer, Cham, first edition edition, 2005.
\newblock ISBN 9783031066481.
\newblock URL \url{https://link.springer.com/book/10.1007/978-3-031-06649-8}.
\newblock First edition, 2005.

\bibitem[Wilson and Izmailov(2020)]{Wilson2020}
Andrew~G Wilson and Pavel Izmailov.
\newblock Bayesian deep learning and a probabilistic perspective of
  generalization.
\newblock In \emph{Advances in Neural Information Processing Systems},
  volume~33, pages 4697--4708. Curran Associates, Inc., 2020.

\bibitem[Yadav and Zhang(2025)]{Yadav2025sobo}
Akash Yadav and Ruda Zhang.
\newblock Bayesian optimization under uncertainty for training a scale
  parameter in stochastic models, 2025.

\bibitem[Ye et~al.(2024)Ye, Yang, Pang, Wang, Wong, Yilmaz, Shi, and
  Tu]{Ye2024}
Fanghua Ye, Mingming Yang, Jianhui Pang, Longyue Wang, Derek~F. Wong, Emine
  Yilmaz, Shuming Shi, and Zhaopeng Tu.
\newblock Benchmarking llms via uncertainty quantification.
\newblock In A.~Globerson, L.~Mackey, D.~Belgrave, A.~Fan, U.~Paquet,
  J.~Tomczak, and C.~Zhang, editors, \emph{Advances in Neural Information
  Processing Systems}, volume~37, pages 15356--15385. Curran Associates, Inc.,
  2024.
\newblock \doi{10.52202/079017-0491}.

\bibitem[Zhou et~al.(2021)Zhou, Zhang, Peng, Zhang, Li, Xiong, and
  Zhang]{Zhou2021}
Haoyi Zhou, Shanghang Zhang, Jieqi Peng, Shuai Zhang, Jianxin Li, Hui Xiong,
  and Wancai Zhang.
\newblock Informer: Beyond efficient transformer for long sequence time-series
  forecasting.
\newblock \emph{Proceedings of the AAAI Conference on Artificial Intelligence},
  35\penalty0 (12):\penalty0 11106--11115, May 2021.
\newblock \doi{10.1609/aaai.v35i12.17325}.
\newblock URL \url{https://ojs.aaai.org/index.php/AAAI/article/view/17325}.

\end{thebibliography}

\appendix

\section{Bayesian Optimization Routine for Tuning $\nu$}
\label{app:bo-nu}

This appendix gives the practical optimization procedure used to calibrate the
stochastic-attention concentration parameter $\nu$. In the main text $\nu$. In the main text, $\nu$ is selected by
minimizing the calibration loss $\mathcal{L}_{\mathrm{SA}}(\nu)$ over an admissible integer
search domain $\Xi\subset\mathbb{N}$. 
Here, we describe how that optimization is carried out in
practice when only noisy Monte Carlo estimates of the loss and the stochastic-deviation scale are
available.

\paragraph{What is being optimized.}
The target quantity is the calibration loss from equation~\ref{eq:lsa},
\[
	\mathcal{L}_{\mathrm{SA}}(\nu)
	=
	\mathbb{E}_{(\mathbf{x},\mathbf{y})\sim\mathcal{D}_{\mathrm{cal}}}
	\left[
		\mathbb{E}\!\left[
			\bigl(R_\nu(\mathbf{x})-Z(\mathbf{x},\mathbf{y})\bigr)^2
			\,\middle|\, \mathbf{x},\mathbf{y}
		\right]
	\right],
\]
where
\[
	R_\nu(\mathbf{x})
	:=
	\left\|
	f_{\boldsymbol{\theta},\nu}(\mathbf{x})
	-
	f_{\boldsymbol{\theta}}(\mathbf{x})
	\right\|_2,
	\qquad
	Z(\mathbf{x},\mathbf{y})
	:=
	\left\|
	\mathbf{y}-f_{\boldsymbol{\theta}}(\mathbf{x})
	\right\|_2.
\]
The conditioning on $(\mathbf{x},\mathbf{y})$ emphasizes that the inner expectation is only
over stochastic attention samples; the realized residual magnitude is fixed once the calibration example is fixed.
As in Sections~\ref{sec:method}--\ref{sec:calibration}, the model parameters
$\boldsymbol{\theta}$ are frozen throughout. The parameter $\nu$ controls the
stochastic-attention law, whereas repeated stochastic forward passes are used only to estimate
the expectations appearing in the loss.

\paragraph{Why the evaluations are noisy.}
For a fixed candidate $\nu\in\Xi$, we do not observe $\mathcal{L}_{\mathrm{SA}}(\nu)$ exactly.
Instead, we compute a Monte Carlo estimate $\widehat{\mathcal{L}}_{\mathrm{SA}}(\nu)$,
whose randomness comes from two sources: (i) sampling mini-batches from the calibration set
$\mathcal{D}_{\mathrm{cal}}$, and (ii) stochastic attention sampling within each forward pass.
The same evaluation also returns an empirical stochastic-deviation scale $\widehat{s}(\nu)$.
The Bayesian optimization surrogate is fit to this scale, not to the calibration loss itself;
the loss is retained only to choose the best evaluated candidate at the end.

\paragraph{Overall calibration routine.}
Algorithm~\ref{alg:calibrate_nu_appendix} describes the full post-hoc calibration loop. Each
candidate evaluation returns both the calibration loss and the stochastic-deviation scale. The
optimization history therefore stores triples $(\nu,\widehat{s},\widehat{\ell})$: the BO
proposal step uses only $(\nu,\widehat{s})$ together with the fixed target scale
$\widehat{s}_0$, while the final selected concentration parameter minimizes the observed loss
$\widehat{\ell}$ over evaluated candidates.

\begin{algorithm}[!t]
  \caption{Calibration of the stochastic-attention concentration parameter $\nu$}
  \label{alg:calibrate_nu_appendix}
  \small
  \begin{algorithmic}[1]
    \Require Trained parameters $\boldsymbol{\theta}$; calibration set $\mathcal{D}_{\mathrm{cal}}$;
    batch sampler $\textsc{SampleBatch}(\mathcal{D}_{\mathrm{cal}})$;
    Monte Carlo passes per batch $M$; number of batch draws per evaluation $B$;
    BO budget $K$; integer search domain $\Xi\subset\mathbb{N}$.
    \Ensure Calibrated concentration parameter $\nu^\star$.

    \Function{TargetScale}{$\mathcal{D}_{\mathrm{cal}}$}
      \State $S_0\gets 0$; $N_0\gets 0$
      \ForAll{$(\mathbf{x}_i,\mathbf{y}_i)\in\mathcal{D}_{\mathrm{cal}}$}
        \State $\widehat{\mathbf{y}}_i\gets f_{\boldsymbol{\theta}}(\mathbf{x}_i)$
        \State $S_0\gets S_0+\|\mathbf{y}_i-\widehat{\mathbf{y}}_i\|_2$
        \State $N_0\gets N_0+1$
      \EndFor
      \State \Return $\widehat{s}_0\gets S_0/N_0$
    \EndFunction

    \Function{EvalStats}{$\nu$}
      \State $L\gets 0$; $S\gets 0$; $N\gets 0$
      \For{$j=1$ \textbf{to} $B$}
        \State $\mathcal{B}_j\gets\textsc{SampleBatch}(\mathcal{D}_{\mathrm{cal}})$
        \ForAll{$(\mathbf{x}_{j,i},\mathbf{y}_{j,i})\in\mathcal{B}_j$}
          \State $\widehat{\mathbf{y}}_{j,i}\gets f_{\boldsymbol{\theta}}(\mathbf{x}_{j,i})$
          \State $r_{j,i}\gets\|\mathbf{y}_{j,i}-\widehat{\mathbf{y}}_{j,i}\|_2$ \Comment{per-sample residual norm}
          \For{$m=1$ \textbf{to} $M$}
            \State $\widehat{\mathbf{y}}_{j,i}^{(m)}\gets f_{\boldsymbol{\theta},\nu}^{(m)}(\mathbf{x}_{j,i})$
            \State $\delta_{j,i}^{(m)}\gets\|\widehat{\mathbf{y}}_{j,i}^{(m)}-\widehat{\mathbf{y}}_{j,i}\|_2$ \Comment{per-sample stochastic-deviation norm}
            \State $L\gets L+(\delta_{j,i}^{(m)}-r_{j,i})^2$
            \State $S\gets S+\delta_{j,i}^{(m)}$
          \EndFor
          \State $N\gets N+1$
        \EndFor
      \EndFor
      \State $\widehat{\ell}\gets L/(NM)$; $\widehat{s}\gets S/(NM)$
      \State \Return $(\widehat{\ell},\widehat{s})$
    \EndFunction

    \State Freeze $\boldsymbol{\theta}$ and disable other stochasticity.
    \State $\widehat{s}_0\gets\Call{TargetScale}{\mathcal{D}_{\mathrm{cal}}}$
    \State Initialize optimization history $\mathcal{H}\gets\emptyset$.
    \For{$k=1$ \textbf{to} $K$}
      \State $\mathcal{H}_s\gets\{(\nu,\widehat{s}):(\nu,\widehat{s},\widehat{\ell})\in\mathcal{H}\}$
      \State $\nu_k\gets\textsc{BayesOptSuggest}(\mathcal{H}_s,\Xi,\widehat{s}_0)$
      \State $(\widehat{\ell}_k,\widehat{s}_k)\gets\Call{EvalStats}{\nu_k}$
      \State $\mathcal{H}\gets\mathcal{H}\cup\{(\nu_k,\widehat{s}_k,\widehat{\ell}_k)\}$
    \EndFor
    \State $\nu^\star\gets\arg\min_{(\nu,\widehat{s},\widehat{\ell})\in\mathcal{H}}\widehat{\ell}$
    \State \Return $\nu^\star$
  \end{algorithmic}
\end{algorithm}

\paragraph{BO-under-uncertainty surrogate.}
To exploit the fact that this is a one-dimensional tuning problem, we use the
BO-under-uncertainty framework of \cite{Yadav2025sobo}. Each noisy evaluation is summarized
through a positive stochastic-deviation scale and compared against a fixed positive target
scale. In our setting, the target scale is computed once as
\begin{equation}
	\label{eq:bo_target_scale}
	\widehat{s}_0
	:=
	\frac{1}{|\mathcal{D}_{\mathrm{cal}}|}
	\sum_{(\mathbf{x}_i,\mathbf{y}_i)\in\mathcal{D}_{\mathrm{cal}}}
	\left\|
	\mathbf{y}_i-f_{\boldsymbol{\theta}}(\mathbf{x}_i)
	\right\|_2,
\end{equation}
and the stochastic-deviation scale returned by an evaluation at candidate $\nu$ is
\begin{equation}
	\label{eq:bo_scale_proxy}
	\widehat{s}(\nu)
	:=
	\frac{1}{NM}
	\sum_{j=1}^{B}\sum_{(\mathbf{x}_{j,i},\mathbf{y}_{j,i})\in\mathcal{B}_j}\sum_{m=1}^{M}
	\left\|
	f_{\boldsymbol{\theta},\nu}^{(m)}(\mathbf{x}_{j,i})
	-
	f_{\boldsymbol{\theta}}(\mathbf{x}_{j,i})
	\right\|_2,
\end{equation}
where $N=\sum_{j=1}^{B}|\mathcal{B}_j|$. The quantity $\widehat{s}(\nu)$ measures the average
per-sample stochastic deviation induced by stochastic attention at concentration level $\nu$,
while $\widehat{s}_0$ measures the corresponding average per-sample residual scale on the
calibration set.

We model the dependence of the stochastic-deviation scale on $\nu$ through the Bayesian
log--log surrogate
\begin{equation}
	\label{eq:bo_loglog}
	\ln \widehat{s}(\nu)
	=
	a\ln \nu + \ln b + \epsilon z,
	\qquad
	z\sim\mathcal{N}(0,1),
\end{equation}
which implies
\[
	\widehat{s}(\nu) = b\,\nu^a\zeta,
	\qquad
	\zeta\sim\log\mathcal{N}(0,\epsilon^2).
\]
This surrogate is fit to the scale history $\mathcal{H}_s=\{(\nu_i,\widehat{s}_i)\}$, not to
the U-shaped loss history. This distinction is important because the stochastic deviation
scale is approximately monotone in $\nu$, whereas the calibration loss is minimized when that
scale matches the residual scale.

Conditional on sampled surrogate parameters $(a,\ln b,\epsilon^2)$, the next candidate can be
selected by minimizing the expected squared discrepancy between the surrogate scale and the
target scale:
\begin{equation}
	\label{eq:bo_surrogate_loss}
	\mathbb{E}\!\left[
		\bigl( b\,\nu^a\zeta - \widehat{s}_0\bigr)^2
		\,\middle|\,
		a,b,\epsilon^2,\widehat{s}_0
		\right]
	=
	\Var\!\left(b\nu^a\zeta\right)
	+
	\left(
	\mathbb{E}[b\nu^a\zeta] - \widehat{s}_0
	\right)^2.
\end{equation}
For $\zeta\sim\log\mathcal{N}(0,\epsilon^2)$ and $a\neq 0$, the positive continuous
minimizer is
\begin{equation}
	\label{eq:bo_continuous_minimizer}
	\nu_{\mathrm{cont}}
	=
    \left(\frac{\widehat{s}_0}{b \exp(\tfrac{3}{2} \varepsilon^2)}\right)^{1/a}.
\end{equation}
The proposal is then projected back to the admissible integer domain $\Xi$.

\paragraph{BO suggestion step.}
Algorithm~\ref{alg:bo_nu} gives the internal proposal mechanism used at each iteration of the
outer calibration loop. The routine fits the log--log surrogate to the scale history, draws
surrogate parameters from the posterior via Thompson sampling, computes the continuous
minimizer of the sampled surrogate objective, and finally projects that value back to the
admissible integer domain $\Xi$.

\begin{algorithm}[!t]
  \caption{Scale-based BO suggestion for $\nu$}
  \label{alg:bo_nu}
  \small
  \begin{algorithmic}[1]
    \Require Scale history $\mathcal{H}_s=\{(\nu_i,\widehat{s}_i)\}$; integer search domain
    $\Xi\subset\mathbb{N}$; target scale $\widehat{s}_0$; initial design size $K_0$.
    \Ensure Next candidate $\nu_{\mathrm{next}}$.

    \State Fit the posterior $p(a,\ln b,\epsilon^2\mid\mathcal{H}_s)$ under
    $\ln\widehat{s}_i = a\ln\nu_i + \ln b + \epsilon z_i$.
    \State Draw $(\widetilde{a},\widetilde{\ln b},\widetilde{\epsilon}^2)
    \sim p(a,\ln b,\epsilon^2\mid\mathcal{H}_s)$.
    \State Compute the continuous minimizer $\nu_{\mathrm{cont}}\gets
      \left(\frac{\widehat{s}_0}{b \exp(\tfrac{3}{2} \varepsilon^2)}\right)^{1/a}$
    
    \State Project to the integer domain:
    $\nu_{\mathrm{next}}
      =
      \Pi_{\Xi}\!\left(\nu_{\mathrm{cont}}\right)$,
    where $\Pi_{\Xi}$ denotes nearest-point projection onto $\Xi$. \State \Return
    $\nu_{\mathrm{next}}$
  \end{algorithmic}
\end{algorithm}

\section{Proofs}
\label{app:proofs}

For completeness, we collect here the proofs of the main stochastic-attention properties stated in Propositions~\ref{prop:unbiased}--\ref{prop:variance}. Throughout, we fix an attention row index $t$ and condition on the deterministic attention weights $\boldsymbol{\pi}_t\in\Delta^{n_k-1}$ and the value matrix $\mathbf{V}\in\mathbb{R}^{n_k\times d_v}$.
We regard $\boldsymbol{\pi}_t$, $\mathbf{W}_t$, and $\widetilde{\boldsymbol{\pi}}_t$ as column vectors. The attention output in the main text is written as the row vector
$\widetilde{\mathbf{o}}_t=\widetilde{\boldsymbol{\pi}}_t^\intercal\mathbf{V}$; therefore, when writing covariance matrices for attention outputs, we take the covariance of the corresponding column vector $\widetilde{\mathbf{o}}_t^\intercal$.
Recall that
\[
\mathbf{W}_t \sim \mathrm{Multinomial}(\nu,\boldsymbol{\pi}_t),
\qquad
\widetilde{\boldsymbol{\pi}}_t = \frac{1}{\nu}\mathbf{W}_t,
\qquad
\widetilde{\mathbf{o}}_t = \widetilde{\boldsymbol{\pi}}_t^\intercal \mathbf{V},
\qquad
\mathbf{o}_t = \boldsymbol{\pi}_t^\intercal \mathbf{V}.
\]

\begin{proof}[Proof of Proposition~\ref{prop:unbiased}]
We prove the three claims in turn.

\paragraph{Conditional mean of the stochastic weights.}
For a multinomial random vector,
\[
\mathbb{E}\!\left[(\mathbf{W}_t)_j \mid \boldsymbol{\pi}_t\right]
=
\nu (\boldsymbol{\pi}_t)_j,
\qquad j=1,\dots,n_k.
\]
Therefore,
\[
\mathbb{E}\!\left[(\widetilde{\boldsymbol{\pi}}_t)_j \mid \boldsymbol{\pi}_t\right]
=
\frac{1}{\nu}\,
\mathbb{E}\!\left[(\mathbf{W}_t)_j \mid \boldsymbol{\pi}_t\right]
=
(\boldsymbol{\pi}_t)_j.
\]
Since this holds componentwise,
\[
\mathbb{E}\!\left[\widetilde{\boldsymbol{\pi}}_t \mid \boldsymbol{\pi}_t\right]
=
\boldsymbol{\pi}_t.
\]

\paragraph{Conditional mean of the stochastic attention output.}
Using linearity of conditional expectation and the fact that $\mathbf{V}$ is fixed under the conditioning,
\[
\mathbb{E}\!\left[\widetilde{\mathbf{o}}_t^\intercal \mid \boldsymbol{\pi}_t,\mathbf{V}\right]
=
\mathbf{V}^\intercal
\mathbb{E}\!\left[\widetilde{\boldsymbol{\pi}}_t \mid \boldsymbol{\pi}_t\right]
=
\mathbf{V}^\intercal \boldsymbol{\pi}_t
=
\mathbf{o}_t^\intercal.
\]
Equivalently,
\[
\mathbb{E}\!\left[\widetilde{\mathbf{o}}_t \mid \boldsymbol{\pi}_t,\mathbf{V}\right]
=
\mathbf{o}_t.
\]

\paragraph{Deterministic recovery as $\nu\to\infty$.}
Let $Z_t^{(m)} \iid \mathrm{Categorical}(\boldsymbol{\pi}_t)$ for $m=1,2,\ldots$, and for each integer $\nu\geq 1$ define
\[
\widetilde{\boldsymbol{\pi}}_{t,\nu}
=
\frac{1}{\nu}\sum_{m=1}^{\nu}\mathbf{e}_{Z_t^{(m)}}.
\]
This has the same distribution as $\nu^{-1}\mathbf{W}_t$ when $\mathbf{W}_t\sim\mathrm{Multinomial}(\nu,\boldsymbol{\pi}_t)$. For each coordinate $j$,
\[
(\widetilde{\boldsymbol{\pi}}_{t,\nu})_j
=
\frac{1}{\nu}\sum_{m=1}^{\nu}\mathbf{1}\{Z_t^{(m)}=j\}.
\]
By the strong law of large numbers,
\[
(\widetilde{\boldsymbol{\pi}}_{t,\nu})_j
\xrightarrow[\nu\to\infty]{\mathrm{a.s.}}
(\boldsymbol{\pi}_t)_j,
\qquad j=1,\dots,n_k.
\]
Since $n_k$ is finite, this implies
\[
\widetilde{\boldsymbol{\pi}}_{t,\nu}
\xrightarrow[\nu\to\infty]{\mathrm{a.s.}}
\boldsymbol{\pi}_t.
\]
The map $\mathbf{u}\mapsto \mathbf{u}^\intercal\mathbf{V}$ is continuous, so
\[
\widetilde{\mathbf{o}}_{t,\nu}
=
\widetilde{\boldsymbol{\pi}}_{t,\nu}^\intercal\mathbf{V}
\xrightarrow[\nu\to\infty]{\mathrm{a.s.}}
\boldsymbol{\pi}_t^\intercal\mathbf{V}
=
\mathbf{o}_t.
\]
\end{proof}

\begin{proof}[Proof of Proposition~\ref{prop:variance}]
We again condition on $(\boldsymbol{\pi}_t,\mathbf{V})$.

\paragraph{Covariance of the stochastic weights.}
Write $\boldsymbol{\pi}_t
=
\bigl((\boldsymbol{\pi}_t)_1,\dots,(\boldsymbol{\pi}_t)_{n_k}\bigr)^\intercal$.
Standard multinomial moment identities give $\Cov\!\left(\mathbf{W}_t \mid \boldsymbol{\pi}_t\right)
=
\nu\Big(
\diag(\boldsymbol{\pi}_t)
-
\boldsymbol{\pi}_t\boldsymbol{\pi}_t^\intercal
\Big)$.
Because $\widetilde{\boldsymbol{\pi}}_t=\nu^{-1}\mathbf{W}_t$, covariance scaling yields
\[
\Cov\!\left(\widetilde{\boldsymbol{\pi}}_t \mid \boldsymbol{\pi}_t\right)
=
\frac{1}{\nu}
\Big(
\diag(\boldsymbol{\pi}_t)
-
\boldsymbol{\pi}_t\boldsymbol{\pi}_t^\intercal
\Big).
\]

\paragraph{Propagation to the stochastic attention output.}
Since the column representation of the stochastic attention output is $\widetilde{\mathbf{o}}_t^\intercal
=
\mathbf{V}^\intercal\widetilde{\boldsymbol{\pi}}_t$,
we apply $\Cov(\mathbf{A}\mathbf{x})=\mathbf{A}\Cov(\mathbf{x})\mathbf{A}^\intercal$ with $\mathbf{A}=\mathbf{V}^\intercal$. This gives
$\Cov\!\left(\widetilde{\mathbf{o}}_t^\intercal \mid \boldsymbol{\pi}_t,\mathbf{V}\right)
=
\mathbf{V}^\intercal
\Cov\!\left(\widetilde{\boldsymbol{\pi}}_t \mid \boldsymbol{\pi}_t\right)
\mathbf{V}$.
Substituting the covariance of $\widetilde{\boldsymbol{\pi}}_t$ yields
\[
\Cov\!\left(\widetilde{\mathbf{o}}_t^\intercal \mid \boldsymbol{\pi}_t,\mathbf{V}\right)
=
\frac{1}{\nu}\,
\mathbf{V}^\intercal
\Big(
\diag(\boldsymbol{\pi}_t)
-
\boldsymbol{\pi}_t\boldsymbol{\pi}_t^\intercal
\Big)
\mathbf{V}.
\]
\end{proof}

\section{Experimental Setup and Reproducibility Details}
\label{app:exp-details}

This appendix section summarizes the experimental settings used in the paper and provides the key implementation details needed to reproduce the reported uncertainty diagnostics. We consider three evaluation settings: (i) transformer-based time-series forecasting with \textsc{TimesFM}, (ii) global weather forecasting with \textsc{ClimaX}, and (iii) UCI regression as a compact testbed for evaluating calibration and sharpness beyond the scientific forecasting examples. Baseline-method summaries are given separately in Appendix~\ref{app:baselines}.

\subsection{TimesFM: Architecture and Forecasting}
\paragraph{Backbone.}
\textsc{TimesFM} is a decoder-only transformer for time-series forecasting that operates on non-overlapping input patches \cite{Das2024}. Each patch is mapped to a token representation and processed by stacked causal self-attention layers. Forecasts are decoded in output patches and extended autoregressively for longer horizons. For deterministic reporting we use the point-forecast decoding head (decode index $=5$).

\paragraph{Datasets and splits.}
We evaluate on the ETT-small benchmarks ETTh1, ETTh2, ETTm1, and ETTm2 \cite{Zhou2021}, using the standard Informer split boundaries:
ETTh*: training 0--8640, validation 8640--11520, test 11520--14400;
ETTm*: training 0--34560, validation 34560--46080, test 46080--57600.
Normalization statistics are computed on the full training segment, while model adaptation uses only the first 10\% of the training portion. Each column is treated as an independent univariate series.

\paragraph{Forecasting protocol and metric.}
We use context length 512 and forecasting horizons of 96 and 192. Predictive accuracy is reported using MAE, averaged across series and forecast horizon.

\paragraph{Lightweight task adaptation.}
We use LoRA adapters attached to the query--key--value projection, attention output projection, and the two linear layers in the feed-forward block. All pretrained backbone parameters are frozen and only LoRA parameters are trained.

\subsection{ClimaX: Architecture and Global Forecasting Protocol}
\paragraph{Backbone.}
\textsc{ClimaX} is a ViT-style foundation model for gridded weather and climate fields \cite{NguyenClimaX23}. It tokenizes each variable field into spatial patches, aggregates variable information through cross-attention, and processes the resulting token sequence with a ViT backbone.

\paragraph{Task and metrics.}
We evaluate global forecasting at a 72-hour lead time. Deterministic predictive accuracy is reported using latitude-weighted MSE, latitude-weighted RMSE, and latitude-weighted anomaly correlation coefficient (ACC).

\paragraph{Model adaptation.}
We start from the pretrained 5.625$^\circ$ checkpoint and use LoRA adapters on the attention projections and MLP linear layers. The pretrained backbone is frozen, while normalization layers remain trainable.

\subsection{UCI Regression Benchmarks}
In addition to the forecasting experiments, we evaluate stochastic attention on standard UCI regression tasks using the standard Gal-style splits to provide a compact and reproducible regression setting \cite{Gal2016}. We begin with a deterministic FT-Transformer regressor and then apply the same post-hoc stochastic-attention calibration procedure. We report deterministic predictive accuracy, along with uncertainty diagnostics based on PIT, empirical coverage, interval width, and, where applicable, proper scoring rules.

\subsection{Predictive Samples for Calibration Diagnostics}
All uncertainty methods considered in the paper induce predictive distributions through samples. To unify evaluation across methods, we form an empirical predictive distribution $\widehat{F}$ from repeated predictive samples for each test case. PIT values are then computed from $\widehat{F}$ for scalar outcomes, and coordinate-wise for structured outputs, aggregating over coordinates when needed.

\paragraph{Stochastic attention.}
For stochastic attention, each predictive sample corresponds to a single stochastic forward pass of the frozen model, with attention rows sampled according to Algorithm~\ref{alg:stochastic_attention}. The concentration parameter $\nu$ is calibrated on held-out data using Algorithm~\ref{alg:calibrate_nu_appendix}, after which repeated stochastic forward passes are used to estimate calibration and sharpness diagnostics.

\paragraph{SWAG-LoRA and MultiSWAG.}
For SWAG-LoRA, predictive samples are obtained by sampling LoRA weights from the SWAG approximation and performing one forward pass per draw. For MultiSWAG, samples are aggregated across multiple independently trained SWAG-LoRA models.

\paragraph{IVON-LoRA.}
For IVON-LoRA, predictive samples are generated by sampling LoRA weights from the learned Gaussian variational approximation and running one forward pass per sample.

\subsection{Implementation Hyperparameters}

\paragraph{TimesFM.}
We use \textsc{TimesFM} 2.5 with 20 layers, 16 heads, model dimension 1280, input patch length 32, and output patch length 128. LoRA uses rank 8 and $\alpha=8$. Optimization uses Adam with cosine learning rate decay, a batch size of 16, and 50 epochs.

\paragraph{ClimaX.}
We use the pretrained 5.625$^\circ$ \textsc{ClimaX} checkpoint with a patch size of 2, an embedding dimension of 1024, a depth of 8, and 16 attention heads. LoRA uses rank 8 and $\alpha=8$. Optimization uses AdamW with warmup and cosine decay in fp16 DDP training.

\paragraph{Sampling budgets.}
Unless otherwise stated, predictive diagnostics are computed from repeated Monte Carlo samples from each method. In all cases, the same sampling budget is used within a given experiment when comparing methods.

\section{Additional ClimaX Results}
\label{app:climax-additional}

\begin{table}[t]
\caption{ClimaX marginal CRPS decomposition (native, $T{=}1$). $\mathbb{E}|X{-}y|$: prediction error (lower$=$better); $\tfrac{1}{2}\mathbb{E}|X{-}X'|$: spread reward (higher$=$more diverse). Units: $\mathrm{m}^2\,\mathrm{s}^{-2}$; \textbf{bold}$=$best per column.}
\label{tab:climax_crps_decomp}
\centering
\small
\begin{tabular}{@{}lccc@{}}
\toprule
Method & CRPS $\downarrow$ & $\mathbb{E}|X{-}y|$ $\downarrow$ & $\tfrac{1}{2}\mathbb{E}|X{-}X'|$ $\uparrow$ \\
\midrule
Contextual Dropout          & \textbf{139.94} & 201.35 & 61.41 \\
MultiSWAG                   & 166.06 & \textbf{171.88} & 5.82 \\
SWAG                        & 166.77 & 172.18 & 5.41 \\
SA (ours, $\nu{=}4$, BO-opt.)  & 221.68 & 367.80 & \textbf{146.12} \\
IVON                        & 241.43 & 257.11 & 15.69 \\
HSA                         & 377.34 & 427.16 & 49.82 \\
\bottomrule
\end{tabular}
\end{table}

\begin{figure}[!htb]
\centering
\includegraphics[width=0.7\linewidth]{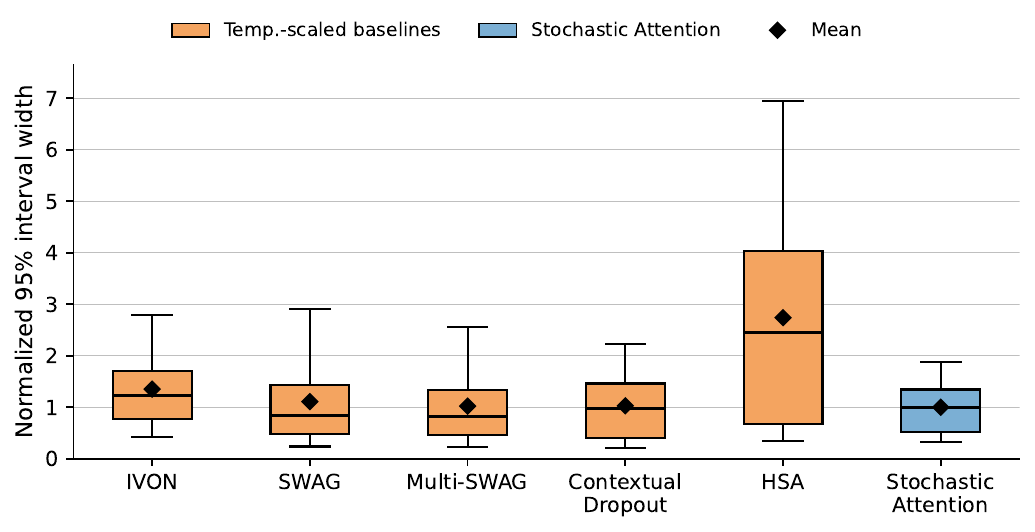}
\caption{Per-method distributions of normalized PI-95 widths on ClimaX (SA${=}1.0$; baselines temperature-scaled). SA is the sharpest overall; MultiSWAG is closest at ${\approx}1.02\times$ but requires ten independent training runs. HSA's accuracy loss forces intervals to ${\approx}2.74\times$ SA. Whiskers: P5/P95; diamond: mean.}
\label{fig:climax_sharpness}
\end{figure}

\subsection{SWAG Variance-Scale Sweep}
\label{app:swag-sweep}

A natural objection to the calibration comparison is that SWAG might simply be poorly tuned. We test this directly on ClimaX by sweeping the SWAG variance scale over $\{1, 2, 4, 8, 16, 32, 64\}$ on the trained checkpoint, holding everything else fixed. Table~\ref{tab:swag_sweep_val} reports validation results across all scales, and Table~\ref{tab:swag_sweep_test} reports held-out test results at the two operating points highlighted by the sweep. The sweep reveals an irreducible trade-off: increasing the scale improves calibration and coverage but degrades accuracy and widens intervals. Even the best natively calibrated SWAG (scale~64) reaches only $W_1{=}0.116$ at coverage $60.9\%$, still over $2{\times}$ SA's $W_1$ at comparable coverage; the scale that retains the best accuracy (scale~16) is even less calibrated ($W_1{=}0.175$). The native under-dispersion of SWAG on ClimaX is not a tuning artifact.

\begin{table}[!htb]
\centering
\caption{Native SWAG variance-scale sweep on ClimaX validation data. Increasing the scale improves calibration and coverage, initially improves accuracy, then widens intervals; scale 64 is selected by validation $W_1$, scale 16 is the best accuracy/sharpness compromise. \textbf{Bold} marks the best value per column.}
\label{tab:swag_sweep_val}
\small
\begin{tabular}{@{}ccccc@{}}
\toprule
Scale & Val.\ RMSE $\downarrow$ & Val.\ $W_1$ $\downarrow$ & Val.\ Cov@95 $\uparrow$ & Val.\ Sharp@95 $\downarrow$ \\
\midrule
1  & 265.73 & 0.215 & 0.089 & 47.4 \\
2  & 265.43 & 0.210 & 0.123 & 64.9 \\
4  & 265.01 & 0.202 & 0.170 & 89.2 \\
8  & 264.45 & 0.190 & 0.233 & 123.0 \\
16 & \textbf{263.86} & 0.174 & 0.320 & \textbf{171.3} \\
32 & 263.93 & 0.149 & 0.445 & 247.7 \\
64 & 267.44 & \textbf{0.116} & \textbf{0.604} & 379.6 \\
\bottomrule
\end{tabular}
\end{table}

\begin{table}[!htb]
\centering
\caption{Held-out test results for the two SWAG operating points highlighted by the variance sweep. Scale 64 is the fairest best-calibrated SWAG point under the validation $W_1$ rule; scale 16 is the strongest accuracy/sharpness compromise.}
\label{tab:swag_sweep_test}
\small
\begin{tabular}{@{}lcccc@{}}
\toprule
Test operating point & RMSE $\downarrow$ & $W_1$ $\downarrow$ & Cov@95 $\uparrow$ & Sharp@95 $\downarrow$ \\
\midrule
SWAG @ 16 (compromise) & \textbf{263.71} & 0.175 & 0.321 & \textbf{171.0} \\
SWAG @ 64 (val.-selected) & 266.67 & \textbf{0.116} & \textbf{0.609} & 381.2 \\
\bottomrule
\end{tabular}
\end{table}

\subsection{Per-Pass Latency}
\label{app:latency}

Per-forward-pass latency of ClimaX increases gradually with $\nu$: from $0.457$s ($\nu{=}1$) to $0.529$s ($\nu{=}512$), compared to $0.424$s for deterministic inference. At the BO-selected $\nu{=}4$, the overhead is approximately $11\%$.

\begin{figure}[!htb]
\centering
\includegraphics[width=0.6\linewidth]{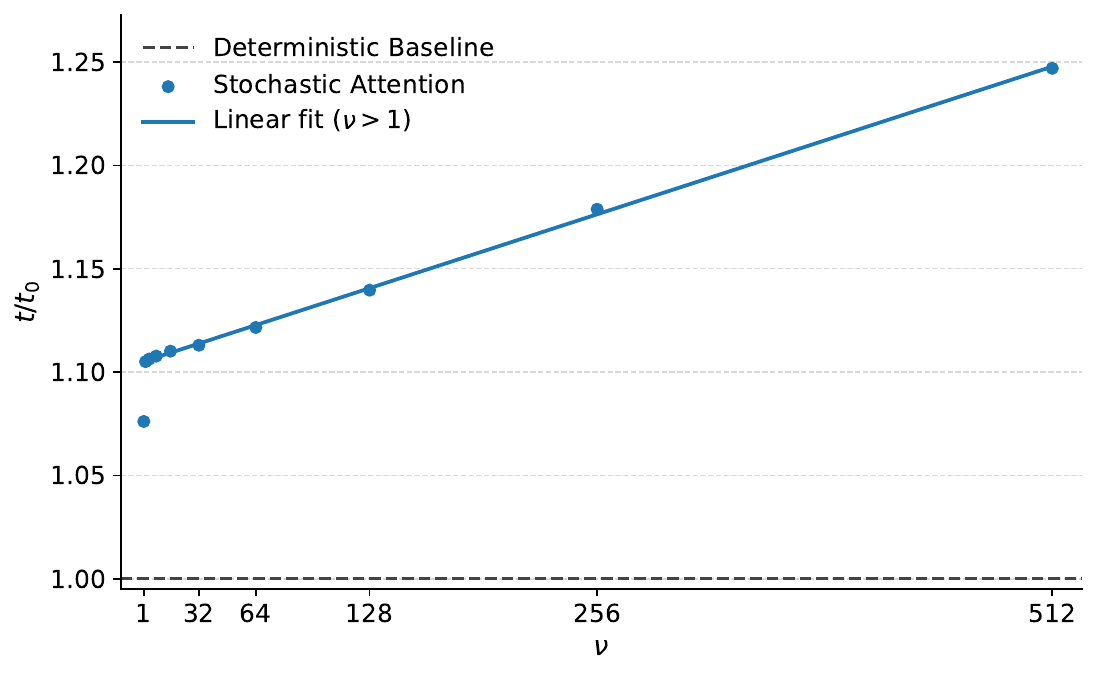}
\caption{Per-forward-pass latency on ClimaX as a function of $\nu$. The BO-selected operating point ($\nu{=}4$) lies in the low-overhead regime.}
\label{fig:latency_appendix}
\end{figure}

\subsection{BO Sensitivity and Robustness}
\label{app:bo-sensitivity}

The $\nu$-selection procedure is stable across initializations. Figure~\ref{fig:bo_sensitivity} shows the Bayesian GLM surrogate landscape used by the BO routine: the normalized calibration loss forms a smooth, well-identified bowl, with posterior samples (light curves) tightly concentrated around the optimum. Across five random seeds, BO selects $\nu_{\mathrm{opt}} \in \{3.91\text{--}3.95\}$, consistently rounding to $\nu{=}4$, and remains stable under a different search range ($\nu_{\mathrm{opt}} \in \{3.70\text{--}3.75\}$). Figure~\ref{fig:bo_beta_hist} shows the corresponding posterior distribution of $\beta^*{=}\ln\nu^*$, which concentrates in a band spanning less than $0.08$.

\begin{figure}[!htb]
\centering
\includegraphics[width=0.55\linewidth]{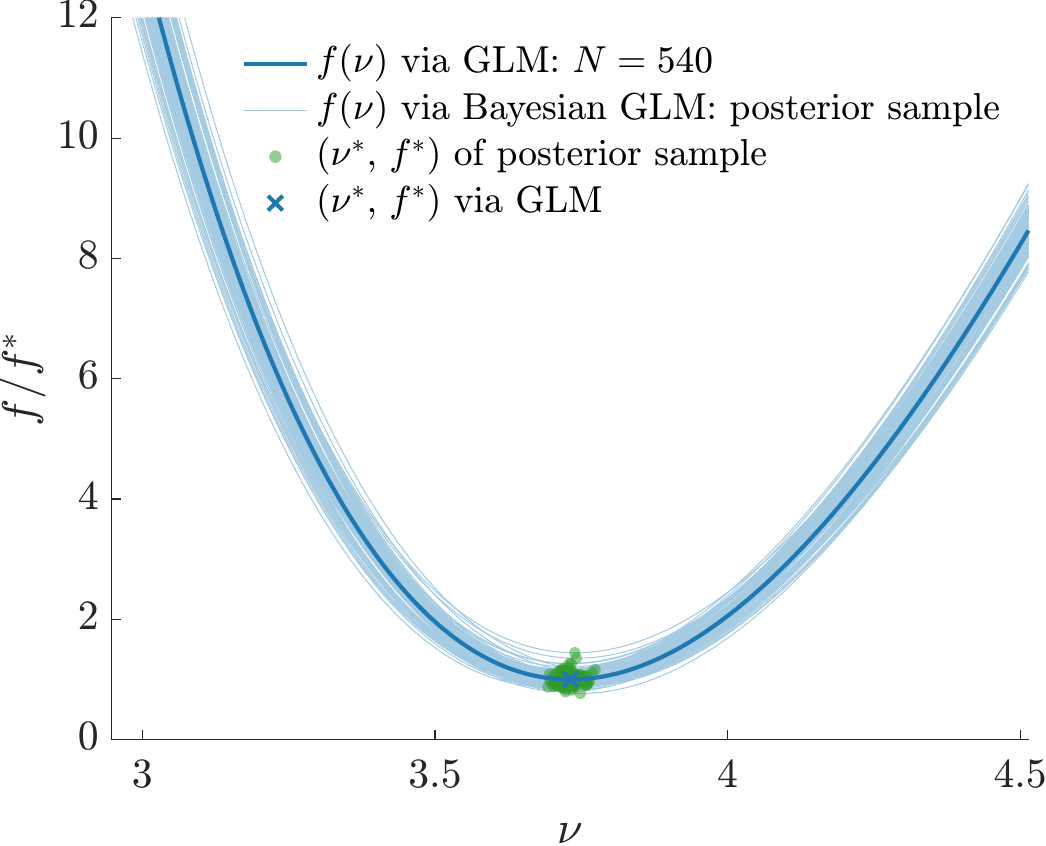}
\caption{Bayesian GLM surrogate for $\nu$ calibration on ClimaX. The normalized calibration loss $f/f^*$ as a function of $\nu$, with posterior samples (light curves) tightly concentrated around the optimum.}
\label{fig:bo_sensitivity}
\end{figure}

\begin{figure}[!htb]
\centering
\includegraphics[width=0.45\linewidth]{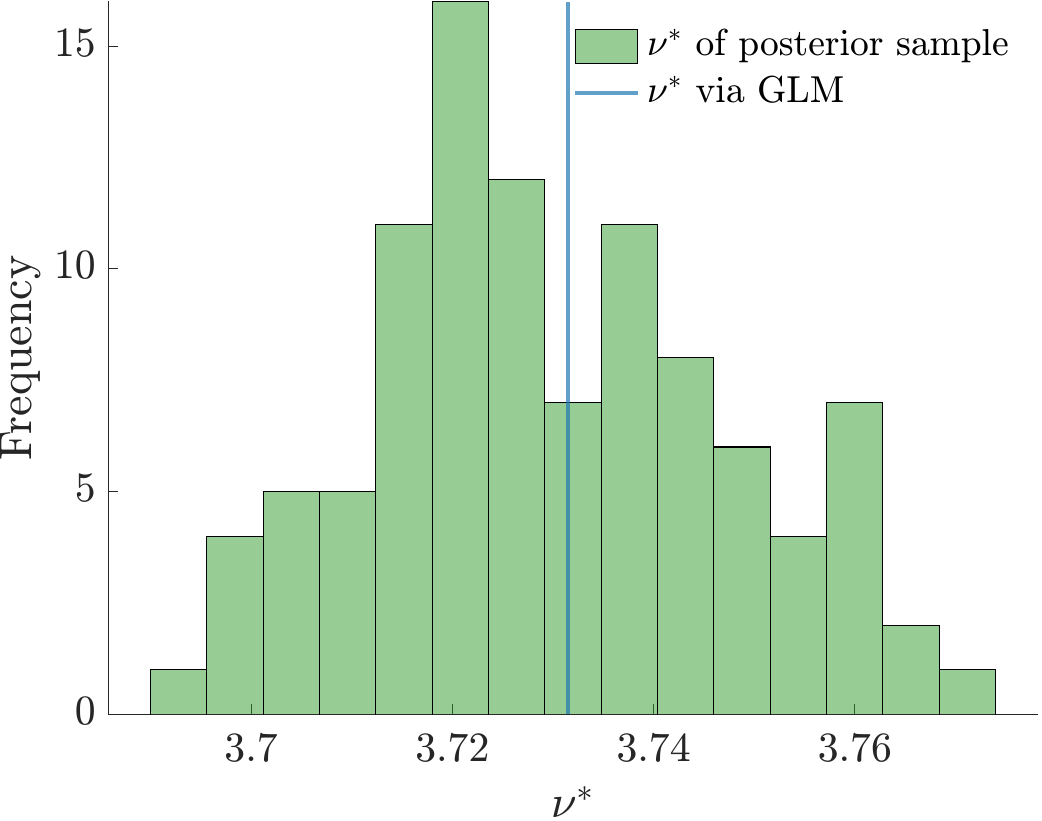}
\caption{Posterior distribution of $\beta^*{=}\ln\nu^*$ from the Bayesian GLM surrogate on ClimaX. The distribution concentrates in a band spanning less than $0.08$.}
\label{fig:bo_beta_hist}
\end{figure}

\subsection{Score-Oriented $\nu$ Ablation}
\label{app:nu-ablation}

Figure~\ref{fig:nu_ablation} reports proper scores as a function of $\nu$ on ClimaX (top), and PIT histograms at the calibration-first $\nu{=}4$ and score-oriented $\nu{=}25$ operating points (bottom). Both CRPS and Energy Score are jointly minimized at $\nu^*{=}25$.

\begin{figure}[!htb]
\centering
\includegraphics[width=0.7\linewidth]{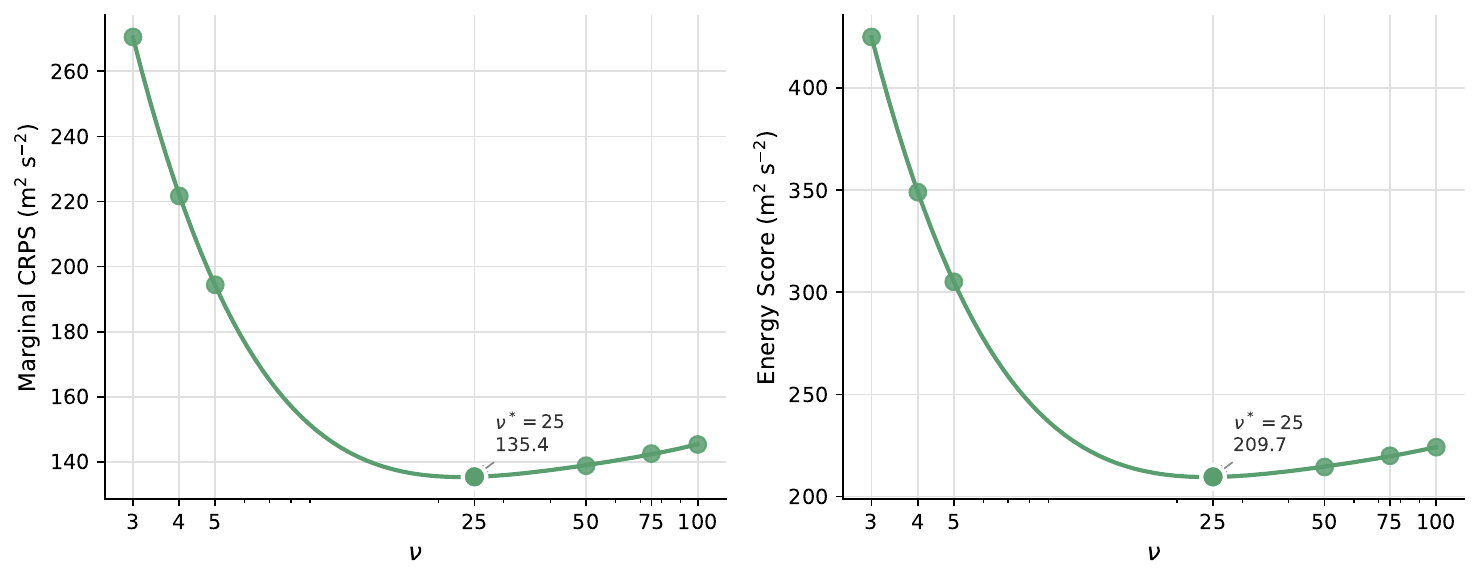}\\[2pt]
\includegraphics[width=0.7\linewidth]{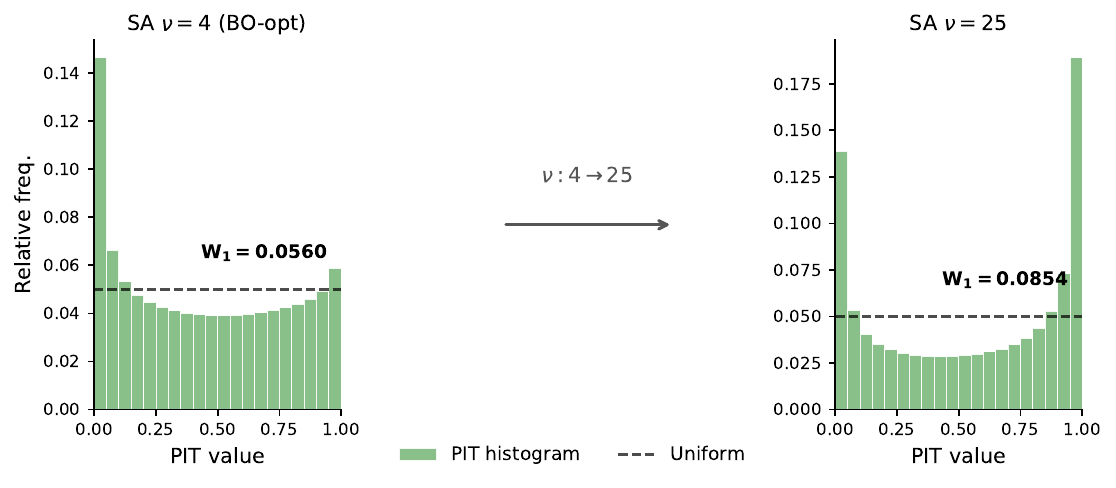}
\caption{\emph{Top:} CRPS and Energy Score vs.\ $\nu$ on ClimaX ($\nu{\geq}3$); both minimized at $\nu^*{=}25$. \emph{Bottom:} PIT at $\nu{=}4$ ($W_1{=}0.056$, calibration-first) and $\nu{=}25$ ($W_1{=}0.085$, score-oriented).}
\label{fig:nu_ablation}
\end{figure}

\subsection{Additional Sharpness Levels}

Figures~\ref{fig:sharp25}--\ref{fig:sharp75} show ClimaX sharpness at PI-25, PI-50, and PI-75, complementing the PI-95 results in the main text.

\begin{figure}[!htb]
\centering
\begin{subfigure}[t]{0.32\linewidth}
\includegraphics[width=\linewidth]{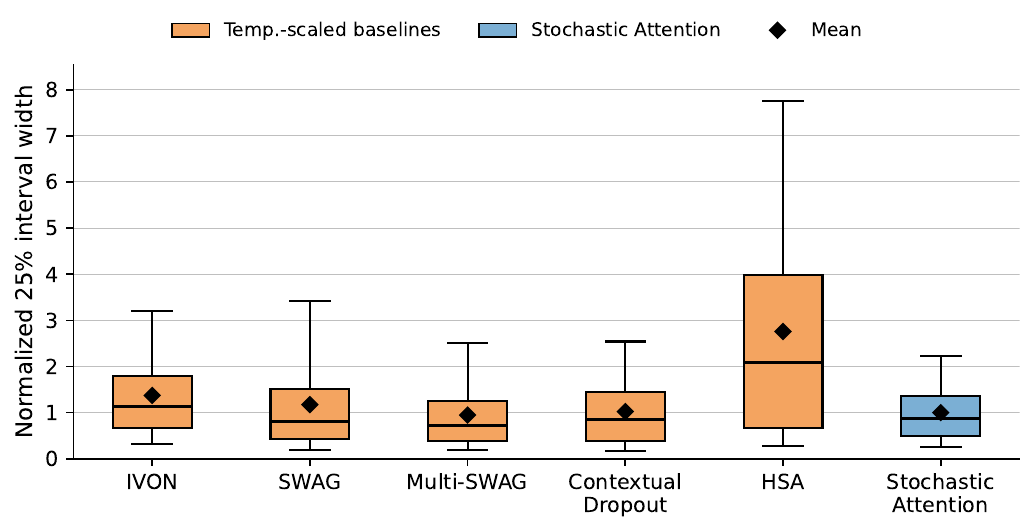}
\caption{PI-25.}
\label{fig:sharp25}
\end{subfigure}
\hfill
\begin{subfigure}[t]{0.32\linewidth}
\includegraphics[width=\linewidth]{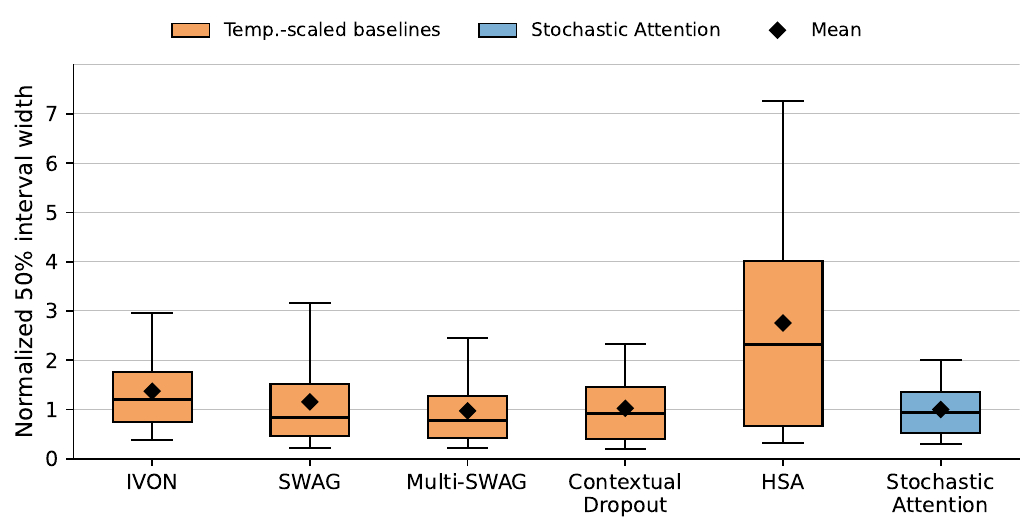}
\caption{PI-50.}
\label{fig:sharp50}
\end{subfigure}
\hfill
\begin{subfigure}[t]{0.32\linewidth}
\includegraphics[width=\linewidth]{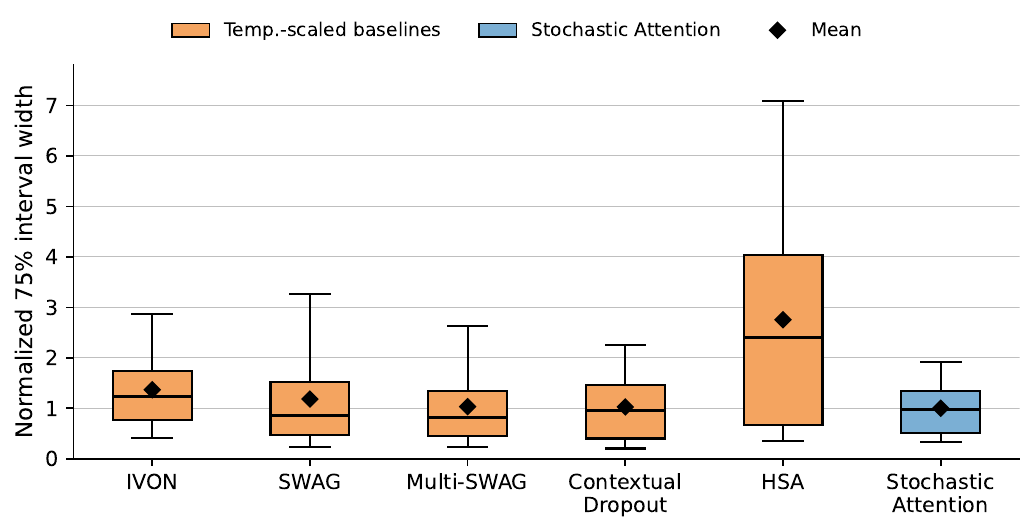}
\caption{PI-75.}
\label{fig:sharp75}
\end{subfigure}
\caption{ClimaX normalized interval widths at additional PI levels (SA${=}1.0$; baselines temperature-scaled). The sharpness advantage of SA is consistent across all interval levels.}
\end{figure}

\section{TimesFM Details}
\label{app:timesfm-details}

Figure~\ref{fig:timesfm_scaled_appendix} shows the equal-coverage visualization for ETTh1 ($H{=}96$), where all baselines are rescaled to match SA's empirical coverage. At matched coverage, sharpness differences become directly comparable: HSA produces the sharpest intervals ($0.34$) while SWAG and MultiSWAG require the widest ($0.47$ and $0.50$). Table~\ref{tab:timesfm_sharpness} reports the full sharpness comparison across all eight ETT configurations.

\begin{figure}[!htb]
\centering
\includegraphics[width=\linewidth]{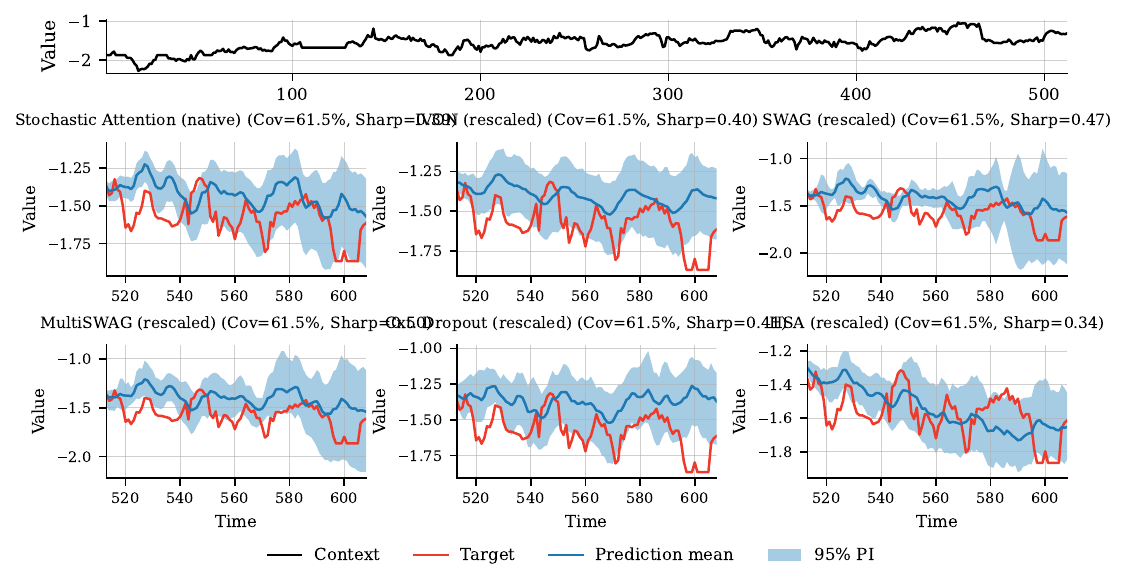}
\caption{Equal-coverage uncertainty on TimesFM (ETTh1, $H{=}96$). All baselines are rescaled to match SA's empirical coverage (Cov${=}61.5\%$); SA remains native. At matched coverage, sharpness (mean PI width) becomes directly comparable. Compare with the native version in Figure~\ref{fig:timesfm_diagnostic}.}
\label{fig:timesfm_scaled_appendix}
\end{figure}

\begin{table}[t]
\caption{TimesFM sharpness comparison at equal coverage. All methods' intervals are rescaled to match SA's empirical coverage exactly, then mean interval width (sharpness) is compared. Lower sharpness is better; \textbf{bold} marks the two sharpest methods per row.}
\label{tab:timesfm_sharpness}
\centering
\small
\begin{tabular}{@{}lc cccccc@{}}
\toprule
Dataset & Cov (\%) & SA & IVON & SWAG & MS & HSA & C-Drop \\
\midrule
ETTh1 (H=96)  & 61.46 & \textbf{0.39} & 0.40 & 0.47 & 0.50 & \textbf{0.34} & 0.41 \\
ETTh2 (H=96)  & 81.25 & \textbf{0.97} & 1.79 & 1.10 & \textbf{0.99} & 1.84 & 1.07 \\
ETTm1 (H=96)  & 88.54 & 1.60 & 3.37 & 2.60 & \textbf{1.52} & 2.03 & \textbf{1.41} \\
ETTm2 (H=96)  & 35.42 & 0.28 & \textbf{0.22} & \textbf{0.26} & 0.27 & 0.40 & 0.32 \\
ETTh1 (H=192) & 84.90 & 1.87 & \textbf{1.59} & 2.11 & 1.79 & 2.51 & \textbf{1.69} \\
ETTh2 (H=192) & 51.04 & \textbf{0.70} & 0.75 & \textbf{0.66} & 0.73 & 1.59 & 0.80 \\
ETTm1 (H=192) & 70.31 & \textbf{0.29} & 0.77 & 0.37 & \textbf{0.33} & 0.49 & 0.35 \\
ETTm2 (H=192) & 51.56 & 0.69 & 0.52 & \textbf{0.37} & \textbf{0.42} & 0.70 & 0.69 \\
\bottomrule
\end{tabular}
\end{table}

\section{UCI Regression Results}
\label{app:uci-per-dataset}

Figure~\ref{fig:uci_yacht_appendix} shows the native PIT histograms and coverage-comparable sharpness for the Yacht dataset as a representative example. Figures~\ref{fig:uci_sharpness_remaining_1}--\ref{fig:uci_sharpness_remaining_2} show the PI-95 sharpness comparison (SA-normalized, baselines temperature-scaled) for all remaining UCI datasets.

\begin{figure}[!htb]
\centering
\includegraphics[width=\linewidth]{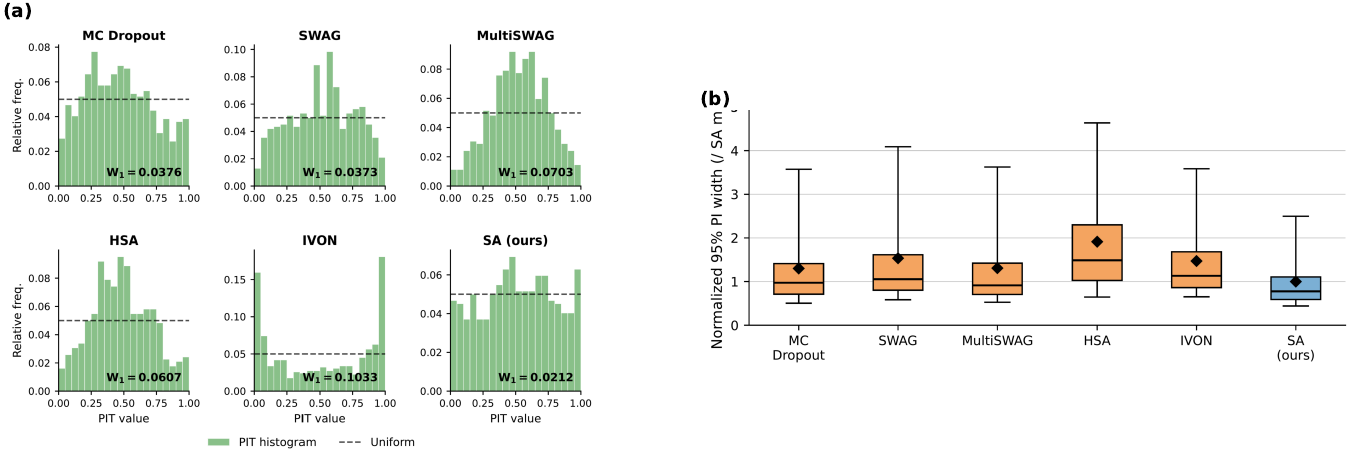}
\caption{UCI Yacht. \textbf{(a)}~Native PIT: SA achieves the most uniform distribution ($W_1{=}0.021$). \textbf{(b)}~PI-95 widths normalized by SA (baselines temperature-scaled); SA is the sharpest.}
\label{fig:uci_yacht_appendix}
\end{figure}

\begin{figure}[!htb]
\centering
\begin{subfigure}[t]{0.32\linewidth}
\includegraphics[width=\linewidth]{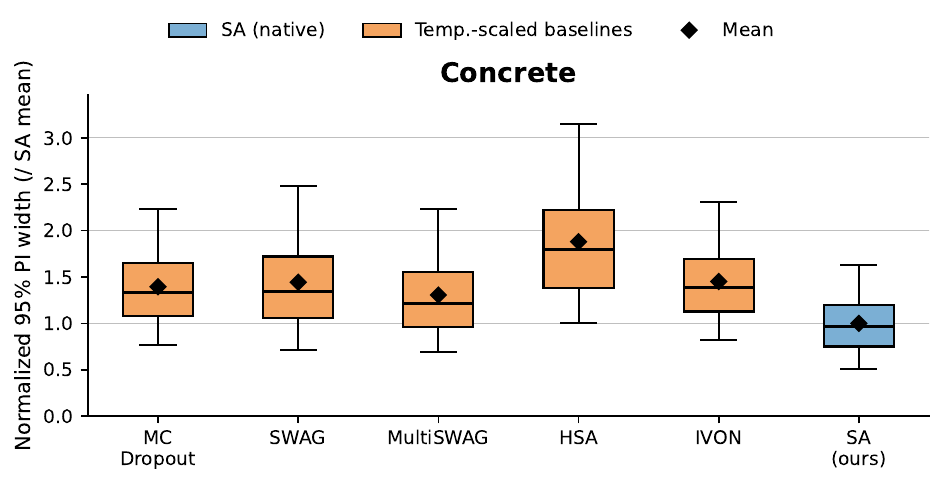}
\caption{Concrete.}
\end{subfigure}
\hfill
\begin{subfigure}[t]{0.32\linewidth}
\includegraphics[width=\linewidth]{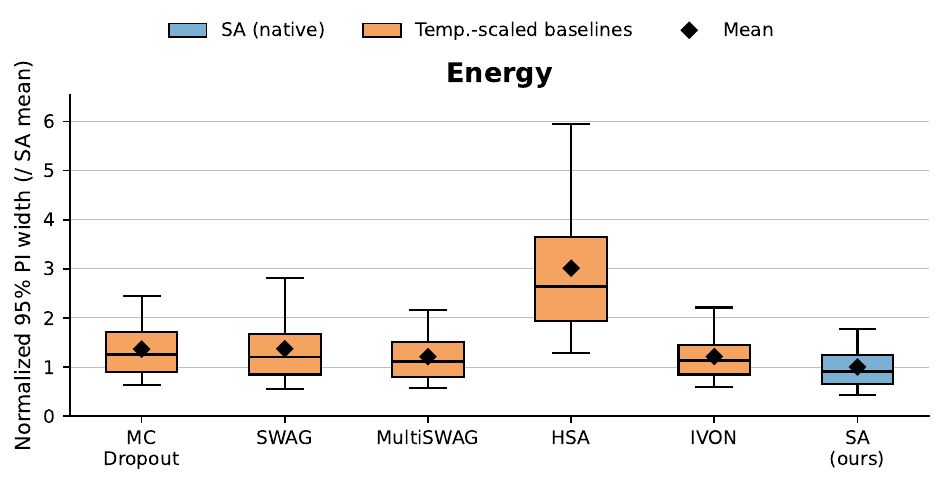}
\caption{Energy.}
\end{subfigure}
\hfill
\begin{subfigure}[t]{0.32\linewidth}
\includegraphics[width=\linewidth]{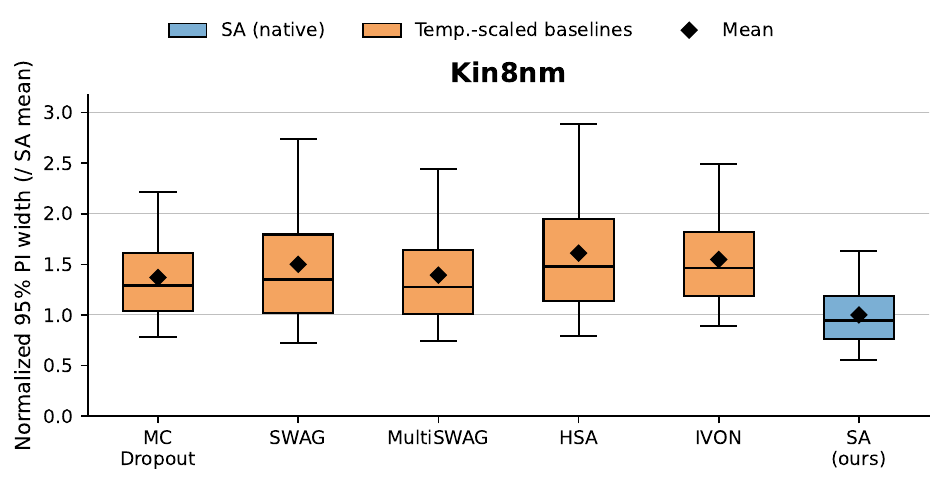}
\caption{Kin8nm.}
\end{subfigure}\\[4pt]
\begin{subfigure}[t]{0.32\linewidth}
\includegraphics[width=\linewidth]{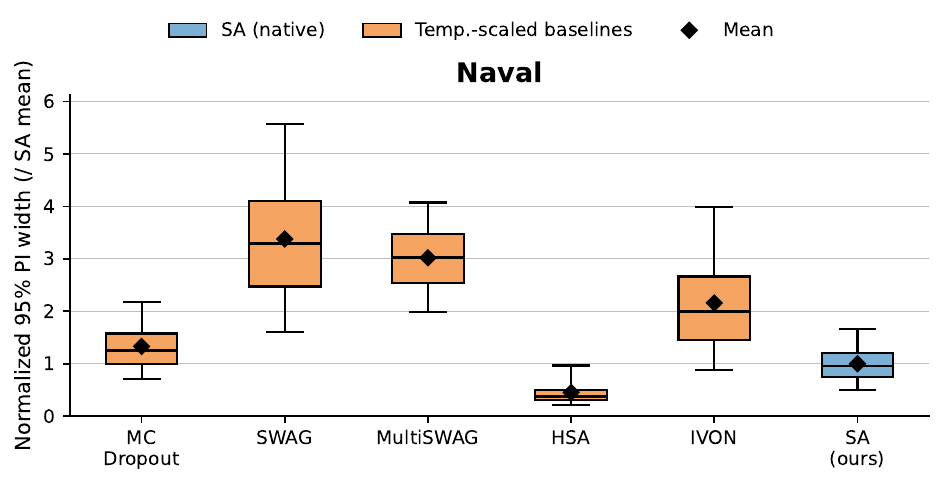}
\caption{Naval.}
\end{subfigure}
\hfill
\begin{subfigure}[t]{0.32\linewidth}
\includegraphics[width=\linewidth]{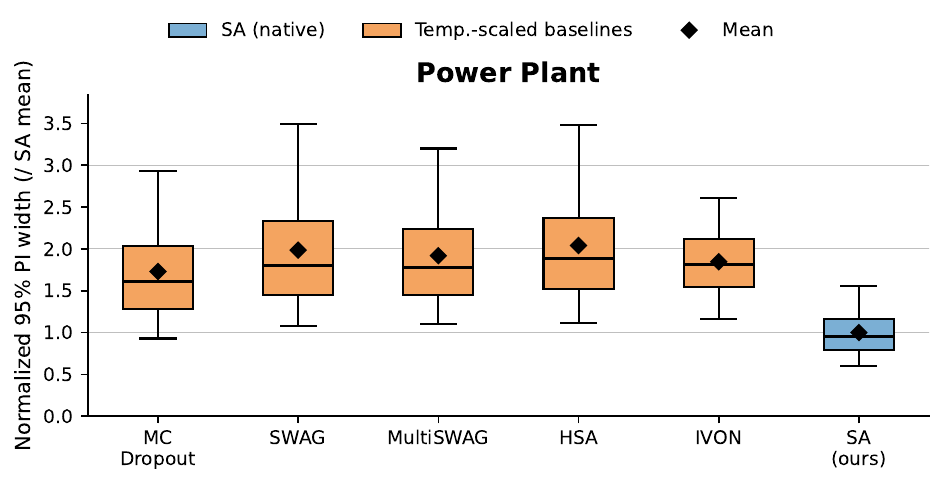}
\caption{Power Plant.}
\end{subfigure}
\hfill
\begin{subfigure}[t]{0.32\linewidth}
\includegraphics[width=\linewidth]{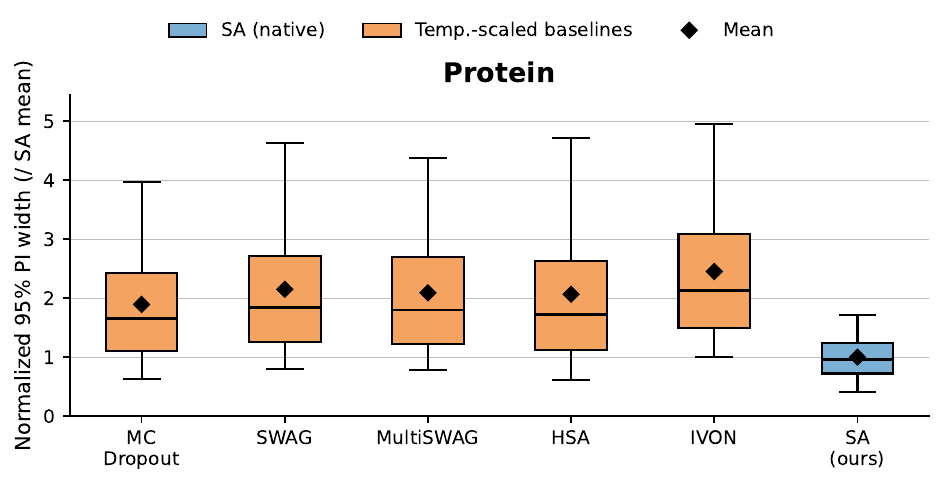}
\caption{Protein.}
\end{subfigure}
\caption{UCI per-dataset PI-95 sharpness (SA${=}1.0$; baselines temperature-scaled). SA is the sharpest on 7 of 8 datasets; the exception is Naval, where HSA's lower prediction error produces tighter scaled intervals.}
\label{fig:uci_sharpness_remaining_1}
\end{figure}

\begin{figure}[!htb]
\centering
\includegraphics[width=0.45\linewidth]{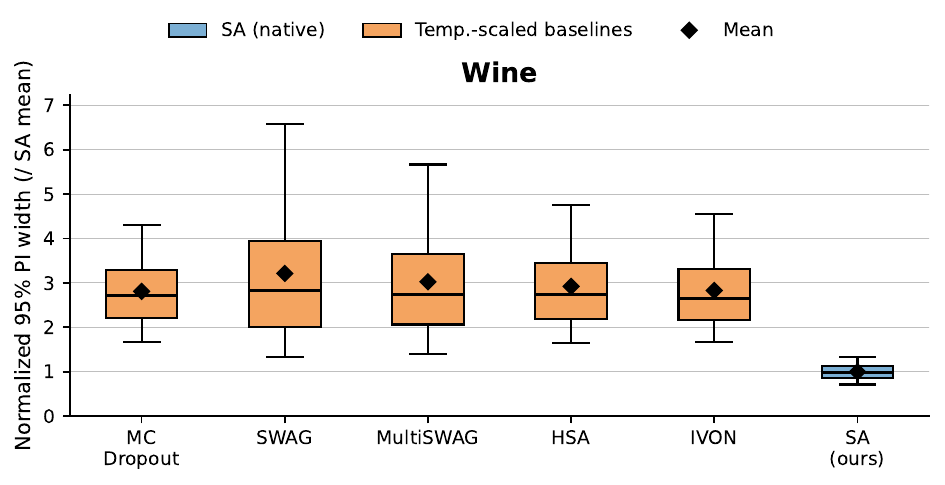}
\caption{UCI Wine: PI-95 sharpness (SA${=}1.0$; baselines temperature-scaled).}
\label{fig:uci_sharpness_remaining_2}
\end{figure}

\subsection{UCI Ensemble CRPS}
\label{app:uci-crps}

We evaluate ensemble CRPS for all six methods on UCI regression to test whether the proper-score story from Section~\ref{sec:operating-point} holds outside ClimaX. CRPS is computed natively from $S{=}19$ predictive samples per method, averaged over 20 splits per dataset (5 for Protein); Table~\ref{tab:uci_crps} reports mean$\pm$std. SA ranks among the two best methods on 4 of 8 datasets (Energy, Naval, Power Plant, Yacht). MultiSWAG ranks among the two best on 4 of 8 datasets (Concrete, Energy, Kin8nm, Wine), concentrated on the lower-RMSE settings where averaging across 10 independently trained components reduces per-sample prediction error. The ranking is consistent with the ClimaX CRPS decomposition: MultiSWAG's score advantage there came from near-deterministic ensembles that win the score by reducing prediction error rather than by improving calibration. We further note that the rankings reported here are at the calibration-first $\nu{=}4$, not at a score-oriented operating point: as established on ClimaX (Section~\ref{sec:operating-point}), selecting $\nu$ to minimize CRPS directly on the same one-parameter SA family yields the best CRPS among all methods evaluated, and this controllable-curve property is intrinsic to the SA construction and carries over to each UCI dataset under an analogous $\nu$-sweep.

\begin{table*}[t]
\caption{Ensemble CRPS on UCI regression ($S{=}19$ samples, native predictions, mean$\pm$std over 20 splits). Lower is better; \textbf{bold} marks the two best per row.}
\label{tab:uci_crps}
\centering
\small
\renewcommand{\arraystretch}{1.1}
\begin{tabular}{@{}lcccccc@{}}
\toprule
Dataset & SA (ours) & MC Dropout & SWAG & MultiSWAG & HSA & IVON \\
\midrule
Concrete    & $3.176{\pm}0.311$ & $\mathbf{3.083{\pm}0.301}$ & $3.181{\pm}0.282$ & $\mathbf{2.993{\pm}0.268}$ & $3.920{\pm}0.324$ & $3.925{\pm}0.335$ \\
Energy      & $\mathbf{0.369{\pm}0.054}$ & $0.413{\pm}0.054$ & $0.374{\pm}0.046$ & $\mathbf{0.344{\pm}0.039}$ & $0.931{\pm}0.164$ & $0.415{\pm}0.064$ \\
Kin8nm      & $0.043{\pm}0.001$ & $\mathbf{0.041{\pm}0.001}$ & $0.042{\pm}0.001$ & $\mathbf{0.039{\pm}0.001}$ & $0.043{\pm}0.001$ & $0.051{\pm}0.002$ \\
Naval       & $\mathbf{0.0015{\pm}0.0003}$ & $0.0017{\pm}0.0003$ & $0.0045{\pm}0.0007$ & $0.0040{\pm}0.0003$ & $\mathbf{0.0006{\pm}0.0003}$ & $0.0029{\pm}0.0005$ \\
Power Plant & $\mathbf{2.357{\pm}0.175}$ & $\mathbf{2.146{\pm}0.152}$ & $2.532{\pm}0.068$ & $2.483{\pm}0.077$ & $2.695{\pm}0.197$ & $2.929{\pm}0.143$ \\
Protein     & $2.368{\pm}0.146$ & $\mathbf{1.938{\pm}0.052}$ & $2.185{\pm}0.059$ & $2.148{\pm}0.034$ & $\mathbf{2.031{\pm}0.032}$ & $2.959{\pm}0.086$ \\
Wine        & $0.484{\pm}0.031$ & $0.439{\pm}0.027$ & $\mathbf{0.432{\pm}0.032}$ & $\mathbf{0.410{\pm}0.029}$ & $0.445{\pm}0.030$ & $0.487{\pm}0.033$ \\
Yacht       & $\mathbf{0.332{\pm}0.116}$ & $\mathbf{0.338{\pm}0.113}$ & $0.372{\pm}0.118$ & $0.349{\pm}0.096$ & $0.469{\pm}0.164$ & $0.410{\pm}0.170$ \\
\bottomrule
\end{tabular}
\end{table*}

\section{Baseline Method Summaries}
\label{app:baselines}

This appendix section summarizes the baseline uncertainty methods used in the experiments. The baselines were selected to represent different mechanisms for producing predictive uncertainty in fine-tuned transformer models: stochastic forward passes induced by dropout, weight-space posterior approximations, mixtures of posterior approximations, variational/Bayesian training, and stochastic architectural perturbations. This organization is useful because stochastic attention is best understood not only as another competing uncertainty method, but as a method that occupies a different point in the design space of \emph{where randomness is introduced} and \emph{when calibration is performed}.

\subsection{MC Dropout}
\label{app:mcdropout}

MC dropout interprets dropout training as approximate Bayesian inference and obtains predictive uncertainty by keeping dropout active at test time and averaging repeated stochastic forward passes \cite{Gal2016}. At test time, MC dropout performs $M$ stochastic forward passes with independently sampled masks. In our comparisons, MC dropout serves as a low-cost stochastic-inference baseline.

\subsection{Contextual Dropout}
\label{app:contextual_dropout}

Contextual dropout generalizes standard dropout by making the dropout distribution depend on the input covariates \cite{XINJIE2021}. For each training pair $(\mathbf{x}_i,y_i)$, contextual dropout introduces a variational distribution $q_{\boldsymbol{\phi}}(\mathbf{z}_i \mid \mathbf{x}_i)$, and optimizes the sample-wise evidence lower bound. Unlike MC dropout, contextual dropout does not rely on fixed hand-tuned dropout rates; instead, it learns sample-dependent stochasticity during training.

\subsection{Stochastic Weight Averaging Gaussian (SWAG)}
\label{app:swag}

SWAG \cite{Maddox2019} builds a Gaussian approximation to the weight posterior using the trajectory of SGD iterates, building on SWA \cite{Izmailov2018}. The resulting SWAG posterior approximation is
\[
q_{\text{SWAG}}(\boldsymbol{\theta}) = \mathcal{N}\!\Big(\boldsymbol{\theta}_{\text{SWA}}, \tfrac{1}{2}(\boldsymbol{\Sigma}_{\diag} + \boldsymbol{\Sigma}_{\text{lr}})\Big),
\]
where $\boldsymbol{\Sigma}_{\diag}$ and $\boldsymbol{\Sigma}_{\text{lr}}$ are diagonal and low-rank covariance terms estimated from the SGD trajectory.

\subsection{MultiSWAG}
\label{app:multiswag}

MultiSWAG \cite{Wilson2020} combines multiple independently trained SWAG approximations into a mixture-of-Gaussians posterior surrogate. Each mixture component $q_l(\boldsymbol{\theta})$ is an independently trained SWAG posterior, and the predictive distribution is approximated by sampling and averaging across all components. Relative to a single SWAG fit, this requires $L$ independent SWAG runs.

\subsection{Improved Variational Online Newton (IVON)}
\label{app:ivon}

IVON is a variational-learning optimizer that learns a diagonal Gaussian distribution over the weights while training with an Adam-like loop \cite{Shen2024}. Unlike post-hoc weight-space methods such as SWAG, IVON optimizes the variational objective directly during training and uses the learned weight distribution itself for uncertainty estimation.

\subsection{Low-Rank Adaptation (LoRA)}
\label{app:lora}

LoRA fine-tunes a pre-trained model by freezing its original weights and introducing a low-rank parameterization for the weight update in selected linear layers \citep{Hu2022}. For a pre-trained weight matrix $\mathbf{W}_0 \in \mathbb{R}^{d \times k}$, LoRA reparameterizes the adapted weight as $\mathbf{W} = \mathbf{W}_0 + \mathbf{B}\mathbf{A}$, where $\mathbf{B} \in \mathbb{R}^{d \times r}$ and $\mathbf{A} \in \mathbb{R}^{r \times k}$ with rank $r \ll \min(d,k)$.

\subsection{SWAG-LoRA, MultiSWAG-LoRA, and IVON-LoRA}
\label{app:swag-multiswag-ivon-lora}

We implement SWAG-LoRA and MultiSWAG-LoRA by combining the SWAG and MultiSWAG procedures with LoRA \cite{Hu2022,Izmailov2018,Maddox2019,Onal2024}. The posterior approximation is placed only over the trainable LoRA adapters rather than over the full set of model weights. IVON-LoRA replaces the standard LoRA optimizer with the IVON variational-learning procedure on the LoRA parameters, while keeping the backbone fixed \citep{Shen2024,Cong2024}.

\subsection{Hierarchical Stochastic Attention (HSA)}
\label{app:hsa}

Hierarchical stochastic attention (HSA) injects randomness directly into transformer self-attention through Gumbel-Softmax sampling \cite{Pei2022}. HSA injects randomness at two levels: first, through stochastic assignment of keys to learned centroids, and second, through stochastic attention over values. In our experiments, HSA serves as the closest architectural baseline to stochastic attention, since both methods alter the attention pathway itself.

\end{document}